\documentclass[journal,twoside]{IEEEtran}
\usepackage{amsmath,amsfonts,amssymb}
\usepackage{algorithm}
\usepackage{algpseudocode}
\usepackage{array}
\usepackage[caption=false,font=normalsize,labelfont=sf,textfont=sf]{subfig}
\usepackage{textcomp}
\usepackage{stfloats}
\usepackage{url}
\usepackage{verbatim}
\usepackage{graphicx}
\usepackage{cite}
\newif\ifarxiv
\arxivtrue

\ifarxiv
    \newcommand{\includesvg}[2][]{\includegraphics[#1]{#2}}
\else
    \usepackage{svg}
    \svgsetup{
      inkscapepath=svg-inkscape,
      inkscapelatex=true
    }
\fi
\usepackage{siunitx}
\usepackage{tikz}
\usepackage{tikz-cd}
\usepackage{pgfplots}
\pgfplotsset{compat=1.18}
\usetikzlibrary{positioning, fit, backgrounds, calc, shapes.geometric, arrows, decorations.pathreplacing, bending, decorations.pathmorphing, fillbetween}
\usepackage{stmaryrd}

\usepackage{tabularx}
\usepackage{booktabs}
\usepackage{makecell}
\usepackage{multirow}
\usepackage{amsthm}

\usepackage[dvipsnames,svgnames,x11names]{xcolor}
\ifarxiv
    \usepackage{listings}
    \lstnewenvironment{minted}[2][]{%
        \lstset{
            basicstyle=\small\ttfamily,
            breaklines=true,
            breakatwhitespace=false,
            breakindent=0pt,
            columns=fullflexible,
            keepspaces=true,
            escapeinside={~}{~},
            xleftmargin=0pt,
        }%
    }{}
\else
    \usepackage{minted}
    \setminted{fontsize=\scriptsize, breaklines, breakanywhere, autogobble, breakautoindent=false}
\fi

\usepackage{hyperref}
\usepackage{cleveref}
\crefname{figure}{Fig.}{Figs.}
\Crefname{figure}{Fig.}{Figs.}
\crefname{equation}{}{}
\Crefname{equation}{}{}

\newif\ifcenterfigcaptions 
\makeatletter
\long\def\@makecaption#1#2{%
\ifx\@captype\@IEEEtablestring%
  \parbox[t]{\hsize}{\footnotesize\noindent #1.~~ #2}%
  \@IEEEtablecaptionsepspace%
\else%
  \@IEEEfigurecaptionsepspace%
  \setbox\@tempboxa\hbox{\footnotesize #1.~~ #2}%
  \ifdim \wd\@tempboxa >\hsize%
    \setbox\@tempboxa\hbox{\footnotesize #1.~~ }%
    \parbox[t]{\hsize}{\footnotesize \noindent\unhbox\@tempboxa#2}%
  \else%
    \ifcenterfigcaptions \hbox to\hsize{\footnotesize\hfil\box\@tempboxa\hfil}%
    \else \hbox to\hsize{\footnotesize\box\@tempboxa\hfil}%
  \fi\fi\fi}
\makeatother

\newcommand{\cmark}{\checkmark}
\newcommand{\xmark}{$\times$}
\newcommand{\notapp}{\textrm{---}}
\newcommand{\notenc}{{\footnotesize\textsc{n/e}}}

\newcommand{\satmark}{{\scriptsize SAT}}
\newcommand{\unsatmark}{{\scriptsize UNSAT}}
\newcommand{\optmark}{{\scriptsize OPT}}
\newcommand{\suboptmark}{{\scriptsize SUBOPT}}
\newcommand{\numpad}[1]{\makebox[1.0em][r]{#1}}

\newtheorem{proposition}{Proposition}

\begin{document}

\newif\ifsubmission
\submissionfalse

\title{Lifelong Robot Recomposition via Persistent Categorical Modeling for Unified Task-Driven Co-Design, Verification, and Planning}

\ifsubmission
    \author{Anonymous Authors}
\else
    \author{Steven Swanbeck and Mitch Pryor
    \thanks{The authors are with Texas Robotics and the Walker Department of Mechanical Engineering,
            The University of Texas at Austin, Austin, TX 78712, USA
            {\tt\small [stevenswanbeck, mpryor]@utexas.edu}}
    }
\fi

\maketitle

\newcommand{\sw}[1]{\texttt{#1}}
\newcommand{\ssw}[1]{\scriptsize\texttt{#1}}

\newcommand{\state}{s}
\newcommand{\States}{\mathcal{S}}
\newcommand{\action}{a}
\newcommand{\Actions}{\mathcal{A}}
\newcommand{\param}{\theta}
\newcommand{\Params}{\Theta}
\newcommand{\struct}{\mu}
\newcommand{\Structs}{\mathcal{U}}
\newcommand{\activation}{b}
\newcommand{\Activations}{\mathcal{B}}

\newcommand{\formula}{\Phi}
\newcommand{\sat}{\mathrm{SAT}}
\newcommand{\unsat}{\mathrm{UNSAT}}
\newcommand{\opt}{\mathrm{OPT}}
\newcommand{\subopt}{\mathrm{SUBOPT}}
\newcommand{\literal}{l}
\newcommand{\Literals}{L}
\newcommand{\soft}{\mathrm{soft}}
\newcommand{\hard}{\mathrm{hard}}

\newcommand{\conjunct}{\varphi}
\newcommand{\Conjuncts}{\mathcal{L}}
\newcommand{\mus}{\mathcal{M}}
\newcommand{\supp}{\mathrm{supp}}

\newcommand{\cat}{\mathcal{C}}
\newcommand{\obj}{\mathrm{Ob}(\cat)}
\newcommand{\morph}{\mathrm{Hom}(\cat)}
\newcommand{\id}{\mathrm{id}}
\newcommand{\resource}{r}
\newcommand{\Resources}{\mathcal{R}}
\newcommand{\view}{\mathrm{view}}

\newcommand{\pre}{\mathrm{Pre}}
\newcommand{\eff}{\mathrm{Eff}}
\newcommand{\post}{\mathrm{Post}}
\newcommand{\keep}{\mathrm{Keep}}

\newcommand{\init}{\mathrm{init}}
\newcommand{\goal}{\mathrm{goal}}
\newcommand{\req}{\mathrm{req}}

\newcommand{\vertsep}{0.35}
\newcommand{\bigvertsep}{0.5}

\tikzset{
    box/.style={
        draw,
        minimum height=0.5cm,
        minimum width=0.5cm,
        align=center,
        anchor=north,
        fill=blue!0,
        outer sep=0pt,
    },
    wire/.style={
        -,
        out=-90, in=90,
    },
    junction/.style={
        fill=black,
        circle,
        inner sep=1pt,
        outer sep=0pt,
        scale=1.2,
    },
    port predicate/.style n args={3}{
        draw,
        isosceles triangle,
        isosceles triangle apex angle=90,
        inner sep=0.75pt, outer sep=0pt, scale=1.2,
        shape border rotate=#1,
        anchor=#2,
        fill=#3,
    },
    expl/.style={
        font=\scriptsize,
        anchor=west,
        align=left,
    }
}

\newcommand{\AddPorts}[2]{%
  \pgfmathsetmacro{\N}{0}%
  \foreach \p/\nm/\lbl in {#2} {\pgfmathparse{\N+1}\global\let\N\pgfmathresult}%

  \foreach [count=\i] \p/\nm/\lbl in {#2} {%
      \pgfmathsetmacro{\fact}{\i/(\N+1)}%
      \def\boundcolor{black}%
      \def\freecolor{white}%

      \ifx\lbl\empty
        \def\placelabel{0}
      \else
        \def\placelabel{1}
      \fi

      \ifnum\pdfstrcmp{\p}{basic}=0
          \coordinate(\nm-in) at ($(#1.north west)!\fact!(#1.north east)$);
          \coordinate(\nm-out) at ($(#1.south west)!\fact!(#1.south east)$);
          \ifnum\placelabel=1
            \node[font=\tiny, anchor=south west] 
                at ($(\nm-in.north east)+(-0.25em,-0.25em)$) {$\lbl$};
          \fi
      \fi

      \ifnum\pdfstrcmp{\p}{read}=0
          \node[port predicate={90}{south}{\boundcolor}, name=\nm-in] 
              at ($(#1.north west)!\fact!(#1.north east)$) {};
          \coordinate(\nm-out) at ($(#1.south west)!\fact!(#1.south east)$);
          \ifnum\placelabel=1
            \node[font=\tiny, anchor=south west] 
                at ($(\nm-in.north east)+(-0.25em,-0.25em)$) {$\lbl$};
          \fi
      \fi

      \ifnum\pdfstrcmp{\p}{write}=0
          \coordinate(\nm-in) at ($(#1.north west)!\fact!(#1.north east)$);
          \node[port predicate={-90}{north}{\boundcolor}, name=\nm-eff] 
              at ($(#1.north west)!\fact!(#1.north east)$) {};
          \coordinate(\nm-out) at ($(#1.south west)!\fact!(#1.south east)$);
          \ifnum\placelabel=1
            \node[font=\tiny, anchor=south west] 
                at ($(\nm-in.north east)+(-0.25em,-0.25em)$) {$\lbl$};
          \fi
      \fi

      \ifnum\pdfstrcmp{\p}{readwrite}=0
          \node[port predicate={90}{south}{\boundcolor}, name=\nm-in] 
              at ($(#1.north west)!\fact!(#1.north east)$) {};
          \node[port predicate={-90}{north}{\boundcolor}, name=\nm-eff] 
              at ($(#1.north west)!\fact!(#1.north east)$) {};
          \coordinate(\nm-out) at ($(#1.south west)!\fact!(#1.south east)$);
          \ifnum\placelabel=1
            \node[font=\tiny, anchor=south west] 
                at ($(\nm-in.north east)+(-0.25em,-0.25em)$) {$\lbl$};
          \fi
      \fi

      \ifnum\pdfstrcmp{\p}{fread}=0
          \node[port predicate={90}{south}{\freecolor}, name=\nm-in] 
              at ($(#1.north west)!\fact!(#1.north east)$) {};
          \coordinate(\nm-out) at ($(#1.south west)!\fact!(#1.south east)$);
          \ifnum\placelabel=1
            \node[font=\tiny, anchor=south west] 
                at ($(\nm-in.north east)+(-0.25em,-0.25em)$) {$\lbl$};
          \fi
      \fi

      \ifnum\pdfstrcmp{\p}{fwrite}=0
          \coordinate(\nm-in) at ($(#1.north west)!\fact!(#1.north east)$);
          \node[port predicate={-90}{north}{\freecolor}, name=\nm-eff] 
              at ($(#1.north west)!\fact!(#1.north east)$) {};
          \coordinate(\nm-out) at ($(#1.south west)!\fact!(#1.south east)$);
          \ifnum\placelabel=1
            \node[font=\tiny, anchor=south west] 
                at ($(\nm-in.north east)+(-0.25em,-0.25em)$) {$\lbl$};
          \fi
      \fi

      \ifnum\pdfstrcmp{\p}{freadfwrite}=0
          \node[port predicate={90}{south}{\freecolor}, name=\nm-in] 
              at ($(#1.north west)!\fact!(#1.north east)$) {};
          \node[port predicate={-90}{north}{\freecolor}, name=\nm-eff] 
              at ($(#1.north west)!\fact!(#1.north east)$) {};
          \coordinate(\nm-out) at ($(#1.south west)!\fact!(#1.south east)$);
          \ifnum\placelabel=1
            \node[font=\tiny, anchor=south west] 
                at ($(\nm-in.north east)+(-0.25em,-0.25em)$) {$\lbl$};
          \fi
      \fi

      \ifnum\pdfstrcmp{\p}{freadwrite}=0
          \node[port predicate={90}{south}{\freecolor}, name=\nm-in] 
              at ($(#1.north west)!\fact!(#1.north east)$) {};
          \node[port predicate={-90}{north}{\boundcolor}, name=\nm-eff] 
              at ($(#1.north west)!\fact!(#1.north east)$) {};
          \coordinate(\nm-out) at ($(#1.south west)!\fact!(#1.south east)$);
          \ifnum\placelabel=1
            \node[font=\tiny, anchor=south west] 
                at ($(\nm-in.north east)+(-0.25em,-0.25em)$) {$\lbl$};
          \fi
      \fi

      \ifnum\pdfstrcmp{\p}{readfwrite}=0
          \node[port predicate={90}{south}{\boundcolor}, name=\nm-in] 
              at ($(#1.north west)!\fact!(#1.north east)$) {};
          \node[port predicate={-90}{north}{\freecolor}, name=\nm-eff] 
              at ($(#1.north west)!\fact!(#1.north east)$) {};
          \coordinate(\nm-out) at ($(#1.south west)!\fact!(#1.south east)$);
          \ifnum\placelabel=1
            \node[font=\tiny, anchor=south west] 
                at ($(\nm-in.north east)+(-0.25em,-0.25em)$) {$\lbl$};
          \fi
      \fi
  }
}

\begin{abstract}
Robotic systems are traditionally designed and deployed in static configurations, with assumptions made at design-time becoming immutable constraints during runtime.
This design-then-deploy paradigm produces performant systems under narrow operating conditions, but renders robots brittle when qualities of themselves, their tasks, or their environments unexpectedly change.
We address this challenge with a compositional framework that formalizes robotic systems as abstract circuits within a strict symmetric monoidal category, in which design and runtime composition of hardware, software, and behavior are synthesized simultaneously via an SMT-based solver, with monoidal functors projecting the system into lifecycle-specific views and free symbolic variables simultaneously solving for parameters and entire component specifications within larger compositions.
This persistent model also supports queries a long-lived system needs beyond plan existence across its entire lifecycle, including mapping Pareto fronts over candidate compositions, diagnosing why a composition has become infeasible, finding its minimal restoration, and reconfiguring with limited change to the deployed system.
We evaluate against official implementations of optimal numeric, stream-based, and SMT-based planners all measured onboard a deployed robot and demonstrate the approach end-to-end in a search-and-rescue scenario in which the robot recognizes when it has become unfit and synthesizes and assumes new holistic configurations to restore operation.
We release our solver and supporting software open-source\footnote{\label{fn:code}Link will be provided in the final manuscript.}.
\end{abstract}
\ifsubmission
    \begingroup
\renewcommand{\abstractname}{Note to Practitioners}
\begin{abstract}
Autonomous systems are usually built to perform fixed jobs in controlled environments, and require significant human intervention when the mission, environment, or system itself changes unexpectedly.
This work was motivated by the challenge of deploying autonomy into dynamic environments where the tasks that must be performed and therefore the capabilities needed may not be known in advance.
We demonstrate that rather than partitioning the system's lifecycle into distinct stages of design, where hardware and software are typically integrated, and deployment, where most systems only leave behavior mutable, we can instead use a unified approach to holistically reason about the system's hardware, software, and behavior across lifecycle stages.
This approach is most useful in applications of autonomy where there is irreducible uncertainty, limited prior knowledge, and a need for rapid response, such as field operations, search and rescue, and exploration in unknown environments.
Adoption of this approach requires that every usable component be faithfully described once by its requirements and effects since the system can only reason over what is modeled, and integration with suitable abstractions like those we reference to ground the high-level holistic reasoning to the low-level engineering required to realize useful real-world systems.
We frame and demonstrate this work within robotics because it represents an extreme case of cross-domain hardware, software, and behavior interactions.
But because our approach abstracts away the specifics of these individual domains, it generalizes to automated systems more broadly.
\end{abstract}
\endgroup

    \begin{IEEEkeywords}
System recomposition, co-design, task-planning. 
\end{IEEEkeywords}
\fi
\section{Introduction}
\label{sec:introduction}

\ifsubmission
    \begin{figure}[t]
    \centering
    \includegraphics[width=0.47\textwidth]{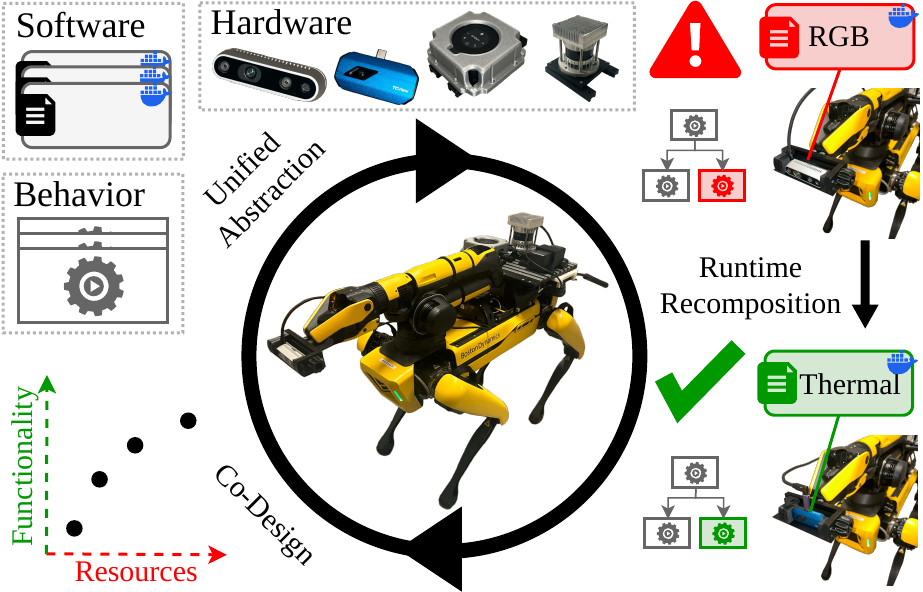}
    \caption{Informal graphical illustration of lifelong robot recomposition. Via unified compositional abstraction, hardware, software, and behavior modules can be consistently reasoned over and composed. This process extends beyond pre-deployment co-design to also support recomposition of the system during runtime in response to unexpected changes in system capabilities, mission requirements, or the environment.}
    \vspace{-1ex}
    \label{fig:cover}
\end{figure}
\fi

\IEEEPARstart{R}{obotic} system integration is a notoriously difficult problem.
Modern robots are composed of heterogeneous mechanical, electrical, and software subsystems that must be carefully orchestrated to achieve high-level goals \cite{citizen_developer_framework_2022}.
Due to this complexity, ad hoc practices dominate the integration process, leading to monolithic designs that are brittle and resistant to change \cite{software_variability_2023}.
Traditionally, the lifecycle of robotic systems is partitioned into discrete, disconnected phases, which are broadly categorized into \emph{design} and \emph{deploy} stages. 
For a system integrated using this conventional approach, its physical morphology and software stack are fixed during design time, and, once deployed, it is expected to remain in this fixed state for its entire operational lifespan with the exception of occasional manual software updates and online high-level task-planning \cite{component_based_robot_engineering_p1_2009,software_reconfiguration_2025}.
Even these routine tasks, let alone substantial upgrades, are completed by robotic experts in the lab.

While this design-then-deploy paradigm produces reliable and performant systems for narrow applications, it imposes critical limitations.
First, a robot with a static configuration may not be able to gracefully adapt when its mission changes or its fitness for that mission deteriorates based on qualities of itself or its environment, such as an unexpected failure in a mission-critical sensor or software module or an unanticipated requirement to navigate an inaccessible type of terrain \cite{robots_adapt_like_animals_2015,long_term_ai_survey_2018}.
Second, because the design of the robot's hardware, software, and eventual task execution are performed by different actors, the resulting system is often a suboptimal compromise rather than a task-aligned optimal solution \cite{co_design_beyond_monotone_2019,cosmic_2026}.
Finally, the integration process is fragmented, both between domains and even within the same domain.
Integration of a new sensor to a robot poses separate challenges from the integration of a new software module, for example. 
And integrating that sensor to an open-world robot is a significantly different process than doing so on a robot designed for a factory setting.
To combat these challenges, \emph{abstractions} such as standardized mechanical and electrical interfaces, middleware, and component-based software architectures already support composition in some domains \cite{component_based_robot_engineering_p1_2009,software_variability_2023,ros2_2022}.
While such abstractions exist to support many of the subsystems that contribute to a robot, the lack of a higher-level abstraction to reason across subsystems means top-level system integration remains a fragmented, ad hoc process for every new configuration.

\unless\ifsubmission
    
\fi


Closely tied to the idea of abstraction is \emph{compositionality}, which describes the ability to recursively reason about a larger system's behavior based on the behaviors and interactions of its constituent parts \cite{compositional_thinking_2022}.
The recursive property of compositional reasoning is made possible by treating the constituent parts as \emph{black boxes} with well-defined \emph{interfaces}, which allows for the construction of larger systems without requiring knowledge of the internal workings of each part.
This is the abstraction underlying extensive recent work in co-design \cite{co_design_theory_2015,compositional_thinking_2022,codei_2025}, which aims to jointly optimize subsystems including hardware morphology, sensing, and control to produce more performant and effective task-driven autonomy.
In co-design literature, the interfaces of each part are expressed as the resources they require and the functionalities they provide, which can be supplied or consumed by other parts in the system \cite{co_design_theory_2015}.

Abstracted components with standardized interfaces enable the cross-domain reasoning required in co-design, but also have broader implications for the accessibility of robotic system integration.
Recent work in online system reconfiguration has demonstrated how systems can reliably integrate new capabilities discovered during runtime with no prior knowledge via compositional abstractions \cite{no_reboot_required_2025,deployment_is_not_destiny_2026}.
While the reconfiguration in these works enables a system to integrate new capabilities, the integration process, including deciding which modules to integrate to the system at which times, is still performed by a human operator, with only high-level task-planning being performed autonomously.
So, although abstractions enable new capabilities to be integrated reliably, they have not yet been generalized to enable the high-level cross-domain reasoning required for autonomous online system integration that subsumes design-time co-design and runtime task-planning.

Building on these existing efforts, we propose a new approach we term \emph{recompositional} robotics, where a system is composed of heterogeneous hardware, software, and behavior modules that can be dynamically reconfigured throughout the system's operational lifespan.
In this view, the distinct problems of design-time co-design and runtime task-planning are unified into a single, continuous optimization problem, with all capabilities that are typically captured in the design stage or the deploy stage becoming mutable throughout the system's lifespan.
In addition to solving for optimal system compositions, there are several other questions a long-lived system must answer, including why a configuration has become infeasible, what its minimal restoration is, and to what extent the deployed system must change to restore operation.
We demonstrate that these questions can all be answered as a family of queries over a persistent categorical model of the system that is continuously updated and reevaluated as the system evolves, and argue that the boundary drawn between design-time and runtime integration is an artifact of engineering practice rather than a fundamental limitation of the problem itself.

The main contributions of this paper are as follows:
\begin{itemize}
    \item A category-theoretic \cite{category_theory_1945} modeling framework in which hardware, software, and behavior components are all morphisms in a free strict symmetric monoidal category and system compositions can be formed and interrogated as categorical circuits.
    \item A translation of this categorical model into a satisfiability modulo theories (SMT) formulation that enables real-time synthesis of optimal compositions that can also include free parametric and structural variables representing unknown values and entire missing components, simultaneously solved in a single process.
    \item An extension of this formulation to support lifelong queries over the same model including diagnosing infeasibility, generating minimal relaxations, and reconfiguring with minimal change to the deployed system.
    \item Applications spanning the lifecycle of a real robotic system, from design-time Pareto-optimal component design to runtime recomposition during a search-and-rescue deployment in which the robot synthesizes and assumes new configurations online in response to changing conditions.
    \item An evaluation of the above against optimal numeric, stream-based, and SMT-based planners, measured onboard a deployed robot. 
\end{itemize}

\section{Related Works}
\label{sec:related_works}

The challenges of integrating hardware design, software composition, and task planning have traditionally been addressed in a compartmentalized manner.
While recent research efforts have moved toward unified frameworks for simultaneous optimization of diverse subsystems, most approaches remain confined to specific stages of the robot's lifecycle.

\subsection{Modular, Reconfigurable, and Self-Reconfigurable Systems}
A large body of work studies robots whose \emph{morphologies} can change.
These robots assemble a target structure from joinable hardware modules, with self-reconfigurable systems coordinating their own docking, actuation, and motion planning to assume new configurations \cite{trends_in_reconfigurable_modular_robots_2017,modular_reconfigurable_review_2019,modular_reconfigurable_review_2025}.
Many modular systems can recognize when their topologies change and regenerate kino-dynamic and geometric models automatically when modules are mated \cite{concert_2026}, and some approaches can even be provided a task and asked which feasible morphology best suits it \cite{design_and_pose_optimization_2024}.
Across this literature, it is assumed that reconfiguration \emph{is} morphological, where the degrees of freedom considered in the reconfiguration process correspond to physical connections between modules and their relative positions and orientations, and adaptation is achieved by rearranging known modules into a new structure.
These approaches can propagate the updated structure to downstream motion planning and control, but more general \emph{capabilities} afforded by a new configuration or interactions with other subsystems beyond kino-dynamics are not captured.
Broader notions of adaptive robotics still frame adaption primarily around control and behavior rather than the holistic system \cite{adaptive_robotics_methodology_2022}, while other work investigates behavioral response to physical damage \cite{robots_adapt_like_animals_2015}, drawing inspiration from how animals can recognize injuries and adjust their actions accordingly.
Extensive work in Plug-and-Produce methodologies within the manufacturing space addresses a similar need by treating reconfigurability as a property designed into workcells so that operational flexibility and efficiency are preserved as requirements and technologies evolve \cite{plug_and_produce_review_2024}.

In our setting, the components are heterogeneous and span layers that include electromechanical hardware modules, software stacks, and behavior-based task execution.
The result is that physical morphology is only one axis along which a system can be reconfigured, and it is often not the cheapest option.
For example, swapping out a sensor driver or re-parameterizing a behavior tree may restore operation and likely can be performed more quickly and autonomously than exchanging a physical sensor.
Accordingly, the configurations generated by our approach are not physical structures, but rather heterogeneous compositions of black-boxed modules defined only by their interface obligations.
Morphology can be captured within these compositions, but it is treated as one class of variable among many.
Because the term ``modular'' is so strongly tied to morphological reconfiguration and often also implies self-reconfiguration from homogeneous units in the existing literature, we describe this broader class as \emph{recompositional}.
This notion aligns naturally with the compositional, categorical approach motivated in the remainder of this section and described in \Cref{sec:approach}, where every system modification, physical or otherwise, is a morphism in one shared category.

\subsection{Self-Adaptation and Models at Runtime}
Automatic and self-adaptive computing has established the pattern of a system maintaining a runtime model of itself and closing an iterative monitor-analyze-plan-execute cycle over it \cite{automatic_computing_vision_2003,self_managing_systems_2007}.
The models@run.time approach treats the model itself as the adaptation artifact \cite{models_at_runtime_2009,models_at_runtime_2019}, and similar work in robotics reconfigures ROS-based architectures when components degrade \cite{mros_2022}.
More broadly, software reconfiguration at runtime has been explored extensively in robotics, though techniques beyond parameter reconfiguration remain rare in practice \cite{software_reconfiguration_2025}.
We share the premise of a live self-model, but differ in what that model can generate.
The adaptation space in these frameworks is a space of software architecture variants enumerated at design time.
Ours synthesizes configurations spanning hardware, software, and behavior that are not drawn from a predefined or enumerated set of candidates, can include values outside a modeled or enumerated vocabulary, and exposes the model as a logical artifact that can answer queries about why it is infeasible and how much it needs to change rather than only providing an updated configuration.

\subsection{Categorical and Compositional Modeling}
One of the core challenges in robotics and embodied autonomy is describing the complex interactions between heterogeneous components that impose vastly different constraints.
Category theory \cite{category_theory_1945} has emerged as a prominent tool for formalizing similar interactions in engineering and sciences through compositionality \cite{category_theory_for_sciences_2014,seven_sketches_in_compositionality_2018,compositional_thinking_2022}.
Specifically, resource theories provide a way to reason over how different inputs are transformed into outputs \cite{resource_theory_2016} and work on the mathematical frameworks for composing wiring diagrams has been demonstrated in applications such as plug-and-play circuits and dynamic systems analysis \cite{operad_wiring_diagrams_2013,dynamic_systems_operad_wiring_diagrams_2014}.
Previous robot-aligned categorical solutions include efforts in co-design \cite{co_design_theory_2015}, which are discussed further in the following subsection, and frameworks for translating high-level specifications into action that is interoperable between systems and contexts \cite{robocat_2021,task_plan_transfer_2025}.
Although these efforts formalize compositional reasoning and interoperability, leveraging such structure to enable autonomous re-synthesis of embodied configurations under changing constraints remains an unexplored problem.

\subsection{Task-Driven and Computational Co-Design}
The joint optimization of hardware and software has been rigorously addressed through Censi's mathematical theory of co-design \cite{co_design_theory_2015}.
Based on monotone co-design problems (MCDPs), this approach models design as a monotone feasibility relation between functionality and resources, solving for Pareto-optimal solutions that maximize functionality for given resources or minimize resources to achieve a desired functionality.
This work has been expanded in many subsequent publications, including optimizing drone hardware and control policies \cite{co_design_hardware_control_2021} and self-driving vehicle stacks \cite{co_design_embodied_intelligence_2021,codei_2025}.
The primary artifact these methods report is the Pareto front over design candidates.
In our approach, Pareto fronts are generated in one query (\Cref{subsec:applications:design}) over the same model that also performs synthesis and runtime adaptation, rather than requiring a dedicated design-time tool.

A related thread of work is computational co-design, which explores similar design questions \cite{robotic_lifeforms_2000} and has been demonstrated for structural design and simultaneous design of morphology and control over continuous or grammar-generated spaces \cite{robogami_2017,robogrammar_2020}.
This thread makes similar assumptions about the scope of configuration as the modular robotics literature, searching a space generated by a fixed vocabulary of physical primitives and fixing the granularity at which the system is described to typically that of individual physical modules.
Searching within that scope is powerful when the objective is a novel morphology, but it limits the result to a detailed physical model with no natural expression of software or behavior.
We instead reason over black-boxed components described only by the interfaces they require and provide, which is what admits heterogeneous components into a single search.

Both co-design threads target design-time, solving for an optimal configuration within a fixed operational envelope, but not providing a mechanism for the robot to autonomously re-synthesize its configuration during runtime when internal or external constraints change.

\newcommand{\ymark}{$\bullet\!\!\!\circ$}
\newcommand{\pmark}{$\circ$}
\newcommand{\nmark}{$\times$}
\newcommand{\namark}{--}

\begin{table*}[t]
\centering
\caption{Positioning of this work against the threads of Section~\ref{sec:related_works}.
\emph{Components} is what the method has agency over; 
\emph{lifecycle} describes the stage during which each tool is used;
\emph{configuration space} is the set from which a configuration is drawn;
\emph{queries answered} are the questions the method can pose over its own model.
\ymark~= supported, \pmark~= supported in a restricted form (see notes), \nmark~= not supported, \namark~= not applicable.
The separation is structural, as the component and query blocks are satisfied by disjoint sets of rows.
Diagnosis, relaxation, and stability are mature capabilities over plans and planning domains, not over deployed systems, while methods that do reconfigure heterogeneous systems support no queries beyond the configurations they produce.}
\label{tab:positioning}
\footnotesize
\setlength{\tabcolsep}{4pt}
\begin{tabular*}{\textwidth}{@{\extracolsep{\fill}}l c c c l l c c c c c c@{}}
\toprule
& \multicolumn{3}{c}{\textbf{Components}}
& \multirow{2}{*}{\textbf{Lifecycle}}
& \multirow{2}{*}{\makecell[c]{\textbf{Configuration}\\\textbf{Space}}}
& \multicolumn{6}{c}{\textbf{Queries Answered}} \\
\cmidrule(lr){2-4} \cmidrule(lr){7-12}
\textbf{Thread \& Representative Work}
    & HW 
    & SW 
    & Behavior 
    & 
    & 
    & Exist.\ 
    & Opt.\ 
    & Diag.\ 
    & Relax 
    & Stab.\ 
    & Pareto \\
\midrule

Modular/self-reconfigurable robots \cite{modular_reconfigurable_review_2019,modular_reconfigurable_review_2025}
    & \ymark 
    & \nmark 
    & \nmark 
    & Runtime 
    & Unit lattice
    & \ymark 
    & \pmark\rlap{$^{a}$} 
    & \nmark 
    & \nmark 
    & \pmark\rlap{$^{b}$} 
    & \nmark \\

Plug-and-produce/reconfigurable manufacturing \cite{plug_and_produce_review_2024}
    & \ymark 
    & \pmark 
    & \nmark 
    & Both 
    & Device catalog
    & \ymark 
    & \pmark 
    & \nmark 
    & \nmark 
    & \nmark 
    & \nmark \\

Self-adaptation, models@run.time \cite{self_managing_systems_2007,models_at_runtime_2009,mros_2022}
    & \nmark 
    & \ymark 
    & \pmark 
    & Runtime 
    & Enum.\ variants
    & \ymark 
    & \pmark\rlap{$^{c}$} 
    & \pmark\rlap{$^{d}$} 
    & \nmark 
    & \pmark\rlap{$^{e}$} 
    & \nmark \\

Categorical/compositional modeling \cite{seven_sketches_in_compositionality_2018,compositional_thinking_2022,robocat_2021}
    & \pmark 
    & \pmark 
    & \pmark 
    & Both 
    & \namark 
    & \namark 
    & \namark 
    & \namark 
    & \namark 
    & \namark 
    & \namark \\

Task-driven co-design (MCDP) \cite{co_design_theory_2015,codei_2025}
    & \ymark 
    & \ymark 
    & \pmark 
    & Design 
    & Design catalog
    & \ymark 
    & \ymark 
    & \nmark 
    & \nmark 
    & \nmark 
    & \ymark \\

Computational co-design \cite{robogami_2017,robogrammar_2020}
    & \ymark 
    & \pmark\rlap{$^{f}$} 
    & \nmark 
    & Design 
    & Grammar / cont.
    & \ymark 
    & \ymark 
    & \nmark 
    & \nmark 
    & \nmark 
    & \pmark \\

Formal verification \cite{computer_aided_modular_verification_2017,compositional_modular_verification_2022}
    & \nmark 
    & \nmark 
    & \nmark 
    & Both 
    & Given
    & \nmark 
    & \nmark 
    & \pmark\rlap{$^{g}$} 
    & \nmark 
    & \nmark 
    & \nmark \\

Task planning, PDDL/TAMP \cite{pddl_1998_article}
    & \nmark 
    & \nmark 
    & \ymark 
    & Runtime 
    & Enum.\ / disc.
    & \ymark 
    & \ymark 
    & \nmark 
    & \nmark 
    & \nmark 
    & \nmark \\

Task planning, SMT backend \cite{pddl_smt_2020,smtplanplus_2016}
    & \nmark 
    & \nmark 
    & \ymark 
    & Runtime 
    & Exact numeric
    & \ymark 
    & \nmark\rlap{$^{h}$} 
    & \nmark 
    & \nmark 
    & \nmark 
    & \nmark \\

Diagnosis, repair, plan stability \cite{minimum_unsatisfiable_subsets_2008,planning_excuses_2010,plan_stability_2006}
    & \nmark 
    & \nmark 
    & \pmark\rlap{$^{i}$} 
    & Runtime 
    & Given plan
    & \namark 
    & \namark 
    & \ymark 
    & \ymark 
    & \ymark 
    & \nmark \\

\midrule
\textbf{Ours}
    & \ymark
    & \ymark 
    & \ymark 
    & \textbf{Both} 
    & \textbf{Synthesized}
    & \ymark 
    & \ymark 
    & \ymark 
    & \ymark 
    & \ymark 
    & \ymark \\
\bottomrule
\end{tabular*}

\vspace{2pt}
\scriptsize
$^{a}$Minimizes reconfiguration moves or actuation cost, not a task-level objective.\quad
$^{b}$Minimal-move self-reconfiguration captures churn but is confined to morphology.\quad
$^{c}$Utility-based selection over an enumerated variant set.\quad
$^{d}$Functional-level degradation diagnosis; no minimal set of constraints that failed.\quad
$^{e}$Adaptation cost and hysteresis are weighed, but the deployed configuration is not a first-class constraint.\quad
$^{f}$Control policy is co-optimized; the software stack itself is not a design variable.\quad
$^{g}$Counterexample traces explain violations of a checked property, not infeasibility of a composition.\quad
$^{h}$Satisficing: the SMT backend binds continuous parameters exactly but no objective is optimized.\quad
$^{i}$Operates on the plan or the planning domain, never on the deployed system.\quad
\end{table*}

\subsection{Formal Verification}
Verification focuses on the correctness of system compositions, which is especially critical in reconfigurable robots where a specific configuration may be untested prior to deployment \cite{computer_aided_modular_verification_2017,compositional_modular_verification_2022}.
Outside of robotics, solution approaches such as mapping to satisfiability problems \cite{practical_applications_boolean_satisifiability_2008} have become longstanding successful approaches for reasoning over systems with a large parameter space, and extensions beyond the Boolean satisfiability (SAT) problem to satisfiability modulo theory (SMT) problems have enabled reasoning over more complex formulas that can include arithmetic datatypes, strings, quantifiers, and more \cite{smt_introduction_2011}.
Previous works have explored verification of factors including gravitational stability, self-collision, and respecting actuator force limits in modular systems \cite{computer_aided_modular_verification_2017}, while other work has emphasized verification of robotic software architectures based on ROS using timed-automata \cite{ros_based_formal_verification_2017} and assume-guarantee reasoning \cite{compositional_modular_verification_2022}.
However, while many works have identified the need for verification in compositional systems, they have not captured the full scope of system-level configuration, which requires simultaneous reasoning over the complex interactions between hardware and software subsystems, actions performed by the resulting system, and the environment, meaning they do not generalize to our recompositional robotics setting.

\subsection{Task Planning}
Task planning focuses on the scheduling and utilization of available resources during deployment to achieve a provided task.
Traditional approaches, such as PDDL-based planning \cite{pddl_1998_article}, treat the system as a black box with fixed capabilities captured via a set of high-level actions.
Prior works have introduced category theory to planning, including categorical expression of PDDL plans, interoperability in robot programming, and translation of task plans between domains \cite{pddl_categry_theory_2021,robocat_2021,task_plan_transfer_2025}.
Other works have mapped the planning problem to SAT or SMT instances \cite{planning_as_satisifiability_2006,pddl_smt_2020}, with \cite{pddl_smt_2020} specifically mapping PDDL instances into SMT, and SMTPlan+ \cite{smtplanplus_2016} compiling PDDL+ hybrid domains to SMT for iterative-deepening satisficing search.
While these works help solve the task-planning problem, they struggle to express problems in runtime coordination between components and system design prior to deployment without significant manual alignment to a specific problem.
Frequent manual re-expression limits autonomy and responsiveness in the recompositional robotics setting, where a long-lived system must answer various queries about its state and capabilities that change over time.  
In our approach, we treat the task planning problem as part of the larger recomposition problem, expanding its scope beyond composing high-level actions to jointly consider the hardware and software components that enable or contribute to the capabilities of the actions.
We use SMT as our planning backend, and evaluate the ability of our approach to express problems relevant to the recompositional robotics setting against other planning mechanisms, including those backed by SMT, in \Cref{sec:evaluation}.

\subsection{Diagnosis, Repair, and Plan Stability}
Explaining infeasibility via minimal unsatisfiable subsets and restoring it via minimal correction sets is well-studied in constraint satisfaction and SAT/SMT solving \cite{minimum_unsatisfiable_subsets_2008,mus_finding_2013}.
In planning, related threads generate excuses for unsolvable tasks \cite{planning_excuses_2010} or repair the domain model directly when no plan can be generated \cite{model_repair_in_planning_survey_2025}, while work on plan stability generates successor plans that minimally perturb the current plan when it becomes invalidated \cite{plan_stability_2006}.
The threads operate in isolation on the plan or in the planning domain, but each of these capabilities is useful in a long-lived, persistent system that is expected to adapt to change during operation.
Because our formulation generates the joint hardware--software--behavior composition rather than a standalone task plan, the same machinery acquires broader system-level meaning, allowing the robot to explain its infeasibility, generate a relaxation for a concrete resource that would restore feasibility, and prioritize stability to minimize change to the incumbent composition rather than re-solving from scratch.
These capabilities are discussed in \Cref{subsec:approach:queries}.

\subsection{Positioning}
In the context of the recompositional robotics problem, the primary limitation across these threads is that the stages remain decoupled: co-design optimizes the robot within a fixed operational envelope, planning then treats the resulting capabilities as a fixed constraint, and verification acts as an external check on an existing configuration that can signal when it is violated, but cannot directly guide the re-synthesis because it uses independent machinery.
That is, because the stages are mathematically and operationally separate, a planning failure cannot trigger a hardware or software reconfiguration, and the system is brittle to any violation of its design-time assumptions.
A framework that unifies these domains and approaches them using the same formulation can instead resolve failures by holistically re-synthesizing its own structure and capabilities, remaining viable under shifting environments, internal changes such as hardware failures, or entirely new task objectives.
Unification also enables the diagnosis, relaxation, stability, and design-space queries that the above threads develop separately, each against its own restricted model, as a single set of queries over one shared model of the whole system.
Our positioning to prior work is summarized in \Cref{tab:positioning}.
\section{Preliminaries}
\label{sec:preliminaries}

This section establishes the mathematical foundations of our framework prior to \Cref{sec:approach}, which presents our approach. 
We first introduce the categorical structures later used to model robot compositions and then define the state transition system used for planning.
For a deeper discussion of these topics, we refer the reader to \cite{seven_sketches_in_compositionality_2018,compositional_thinking_2022,automated_planning_2004}.

\subsection{Monoidal Category Theory}
\label{subsec:prelims:category_theory}
A \emph{category} $\cat$ is a collection of \emph{objects} $\obj$ and \emph{morphisms} $\morph$ that map between objects.
A morphism $f: A \to B$ represents a transformation from an input type $A$ to an output type $B$.
The following must hold in $\cat$:
\begin{itemize}
    \item \textit{Identity:} For every $A \in \obj$, there is a morphism $\id_A: A \to A$ that preserves the object.
    \item \textit{Sequential Composition:} The binary composition operation $\circ$ is applied to morphisms when the codomain of the first morphism equals the domain of the second:
    \begin{equation}
        \begin{array}{c}
            f: A \to B, \quad g: B \to C \\
            g \circ f: A \to C.
        \end{array}
    \end{equation}
    The composition is \emph{associative}
    \begin{equation}
        \begin{array}{c}
            h: C \to D \\
            h \circ (g \circ f) = (h \circ g) \circ f,
        \end{array}
    \end{equation}
    and respects \emph{identity}
    \begin{equation}
        \id_B \circ f = f = f \circ \id_A.
    \end{equation}
\end{itemize}

These properties of $\cat$ are illustrated graphically in the following diagrams:
\[
\begin{array}{c@{\quad\quad\quad\quad}c@{\;=\quad}c}
    \vcenter{\hbox{\begin{tikzpicture}
    \node(A){$A$};
    \node[below=0.5cm of A](B){$B$};
    \node[below=0.5cm of B](C){$C$};
    \node[above=0.2cm of A](IDA){};
    \draw[->] (A) to node[midway, left] {$f$} (B);
    \draw[->] (B) to node[midway, left] {$g$} (C);
    \draw[->] (A) to [out=-45, in=45] node[midway, right] {$g \circ f$} (C);
    \draw[->](A.east) arc(-45:225:0.35) node[midway, right, xshift=3mm] {$\id_A$};
\end{tikzpicture}}}
&
    \vcenter{\hbox{\begin{tikzpicture}
    \coordinate (A) at (0,0);
    \node[box](f) at ($ (A) - (0,\bigvertsep) $) {$f$};
    \AddPorts{f}{basic/f1/};
    \node[box](g) at ($ (f.south) - (0,\bigvertsep) $) {$g$};
    \AddPorts{g}{basic/g1/};
    \coordinate (C) at ($ (g.south) - (0,\bigvertsep) $);

    \draw[] (A) -- node[midway, right, xshift=1mm] {$A$} (f1-in);
    \draw[] (f1-out) -- node[midway, right, xshift=1mm] {$B$} (g1-in);
    \draw[] (g1-out) -- node[midway, right, xshift=1mm] {$C$} (C);
\end{tikzpicture}}}
&
    \vcenter{\hbox{\begin{tikzpicture}
    \coordinate (A) at (0,0);
    \node[box](gf) at ($ (A) - (0,\bigvertsep) $) {$g \circ f$};
    \AddPorts{gf}{basic/p/};
    \coordinate (C) at ($ (gf.south) - (0,\bigvertsep) $);
    
    \draw[] (A) -- node[midway, right, xshift=1mm] {$A$} (p-in);
    \draw[] (p-out) -- node[midway, right, xshift=1mm] {$C$} (C);
\end{tikzpicture}
}}
\end{array}
\]
On the left, a \emph{commutative} diagram expresses these algebraic relationships.
The right shows these same relationships in a \emph{string diagram}, which depicts morphisms as boxes and objects as wires.
We prefer string diagrams throughout this work as they provide an intuitive topological representation of system connectivity, where wires correspond to physical or informational resource flows and boxes represent the processes that transform them.

A \emph{monoidal category} extends this basic structure with parallel composition via the tensor product bifunctor $\otimes: \cat \times \cat \to \cat$ and a unit object $I$.
The tensor product allows for grouping of objects $(A \otimes B)$ and the simultaneous application of morphisms:
\begin{equation}
    f \otimes h: A \otimes C \to B \otimes D.
\end{equation}
We assume $\cat$ is \emph{strict}, meaning the structural isomorphisms for associativity and units are treated as equalities: 
\begin{equation}
\label{eq:ct:strict_properties}
    \begin{array}{c}
         A \otimes (B \otimes C) = (A \otimes B) \otimes C \\
         I \otimes A = A = A \otimes I.
    \end{array}
\end{equation}
These monoidal properties are summarized in the following string diagrams:
\[
\begin{array}{c@{\quad\quad\quad\quad}c}
    \vcenter{\hbox{\begin{tikzpicture}
    \coordinate (A) at (-0.5,0);
    \coordinate (C) at (0.5,0);

    \coordinate (mid) at ($ (A)!0.5!(C) $);

    \node[box](fh) at ($ (mid) - (0,\bigvertsep) $) {$f \otimes h$};
    \AddPorts{fh}{basic/f/, basic/h/};

    \coordinate (lower) at ($ (fh.south) - (0,\bigvertsep) $);
    \coordinate (B) at (A |- lower);
    \coordinate (D) at (C |- lower);

    \draw[wire] (A) to node[midway, left, xshift=-1mm] {$A$} (f-in);
    \draw[wire] (C) to node[midway, right, xshift=1mm] {$C$} (h-in);
    \draw[wire] (f-out) to node[midway, left, xshift=-1mm] {$B$} (B);
    \draw[wire] (h-out) to node[midway, right, xshift=1mm] {$D$} (D);
\end{tikzpicture}}}
&
    \vcenter{\hbox{\begin{tikzpicture}
    \coordinate (I) at (-0.5,0);
    \coordinate (A) at (0.5,0);

    \coordinate (mid) at ($ (I)!0.5!(A) $);

    \node[box](rho) at ($ (mid) - (0,\bigvertsep) $) {$\rho_A$};
    \AddPorts{rho}{basic/l1/, basic/l2/};

    \coordinate (A') at ($ (rho.south) - (0,\bigvertsep) $);

    \draw[wire] (I) to node[midway, left, xshift=-1mm] {$I$} (l1-in);
    \draw[wire] (A) to node[midway, right, xshift=1mm] {$A$} (l2-in);
    \draw[wire] (rho.south) to node[midway, right, xshift=1mm] {$A$} (A');
\end{tikzpicture}}}
\end{array}
\]

The compatibility between serial and parallel compositions is governed by the \emph{interchange law}:
\begin{equation}
\label{eq:interchange}
\begin{array}{c}
    k: D \to E \\
    (g \circ f) \otimes (k \circ h) = (g \otimes k) \circ (f \otimes h),
\end{array}
\end{equation}
which is expressed in the following string diagrams:
\[
\begin{array}{c@{\quad\quad=\quad\quad}c}
    \vcenter{\hbox{\begin{tikzpicture}
    \coordinate (A) at (-0.5,0);
    \node[box](f) at ($ (A) - (0,\bigvertsep) $) {$f$};
    \AddPorts{f}{basic/f1/};
    \node[box](g) at ($ (f.south) - (0,\vertsep) $) {$g$};
    \AddPorts{g}{basic/g1/};
    \coordinate (C') at ($ (g.south) - (0,\bigvertsep) $);

    \coordinate (C) at (0.5,0);
    \node[box](h) at ($ (C) - (0,\bigvertsep) $) {$h$};
    \AddPorts{h}{basic/h1/};
    \node[box](k) at ($ (h.south) - (0,\vertsep) $) {$k$};
    \AddPorts{k}{basic/k1/};
    \coordinate (E) at ($ (k.south) - (0,\bigvertsep) $);

    \draw[wire] (A) to node[midway, left, xshift=-1mm] {$A$} (f1-in);
    \draw[wire] (f1-out) to (g1-in);
    \draw[wire] (g1-out) to node[midway, left, xshift=-1mm] {$C$} (C');

    \draw[wire] (C) to node[midway, right, xshift=1mm] {$C$} (h1-in);
    \draw[wire] (h1-out) to (k1-in);
    \draw[wire] (k1-out) to node[midway, right, xshift=1mm] {$E$} (E);
\end{tikzpicture}}}
&
    \vcenter{\hbox{\begin{tikzpicture}
    \coordinate (A) at (-0.5,0);
    \coordinate (C) at (0.5,0);

    \node[box](fh) at (0,-\bigvertsep) {$f \otimes h$};
    \AddPorts{fh}{basic/fh1/,basic/fh2/};

    \node[box](gk) at ($ (fh.south) - (0,\vertsep) $) {$g \otimes k$};
    \AddPorts{gk}{basic/gk1/,basic/gk2/};

    \coordinate (lower) at ($ (gk.south) - (0,\bigvertsep) $);
    \coordinate (C') at (A|-lower);
    \coordinate (E) at (C|-lower);

    \draw[wire] (A) to node[midway, left, xshift=-1mm]{$A$} (fh1-in);
    \draw[wire] (C) to node[midway, right, xshift=1mm]{$C$} (fh2-in);
    \draw[wire] (fh1-out) to (gk1-in);
    \draw[wire] (fh2-out) to (gk2-in);
    \draw[wire] (gk1-out) to node[midway, left, xshift=-1mm]{$C$} (C');
    \draw[wire] (gk2-out) to node[midway, right, xshift=1mm]{$E$} (E);
\end{tikzpicture}}}
\end{array}
\]

\subsection{State Transition Systems}
\label{subsec:prelims:state_transitions}

To bridge the gap from abstract structure to autonomous reasoning, we ground our categorical model in a planning state transition model \cite{automated_planning_2004}.
A state transition system is defined as a tuple $\Sigma = \langle \States, \Actions, \gamma \rangle$, where:
\begin{itemize}
    \item $\States$ is the set of all possible system states;
    \item $\Actions$ is the set of all possible actions that transition one state into another, and;
    \item $\gamma$ is a transition function $\gamma: \States \times \Actions \to \States$ that maps a state and action to another state.
\end{itemize}
A planning problem is expressed as $\mathcal{P} = (\Sigma, \state_\init, \States_\goal)$, and a plan $\pi = \langle \action_0, \action_1, \action_2, \dots \rangle,\; \action_i \in \Actions$ is a sequence of actions.
$\pi$ is a solution to $\mathcal{P}$ if it transitions the initial state $\state_\init$ into a valid goal state $\state_\goal \in \States_\goal$, or $\state_\goal \subseteq \gamma(\state_\init, \pi)$.

\section{Approach}
\label{sec:approach}

We propose a compositional framework that treats robotic system integration as a single, continuous synthesis problem.
By modeling the robot as an abstract circuit within a strict symmetric monoidal category, we can reason over heterogeneous mechanical, electrical, software, and behavioral subsystems in a unified manner.
This section details the mapping of resources to categorical objects, the use of functors to project the system into specific lifecycle views, the translation of this structure into an SMT-based optimization problem, and the family of lifelong queries that the resulting formulation supports.

\subsection{The Robot-Circuit Model}
We define our domain $\cat$ as the free strict symmetric monoidal category generated by a set of atomic resource types $\Resources$.
This allows us to represent the robot's state as a string diagram capturing the consumption and production of resources through state transitions.

\subsubsection{Composite Resource Strings as Objects}
Following the definitions in \Cref{sec:preliminaries}, objects $\obj$ represent the states of our system.
Specifically, a state is the tensor product of atomic resources $\state = \resource_1 \otimes \resource_2 \otimes \dots \otimes \resource_n$ where $\resource_i \in \Resources$.
While a key contribution of this work is the uniform treatment of all system components as undifferentiated categorical structures, representing every atomic resource individually would render diagrams unwieldy.
Therefore, for visual and conceptual clarity in this paper, we group resources into four broad partial states:
\begin{itemize}
    \item \emph{Electromechanical:}
    \begin{equation}
        \vspace{-1ex}
        \begin{array}{c@{\;\;\raisebox{0.2em}{$=$}\;\;}c@{\;\;\raisebox{0.2em}{$\otimes$}\;\;}c@{\;\;\raisebox{0.2em}{$\otimes$}\;\;}c@{\;\;\raisebox{0.2em}{$\otimes$}\;\;}c}
    \shortstack[c]{%
      \raisebox{-.15cm}{\includesvg[height=0.5cm]{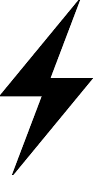}}\\[-0.2em]
      \ssw{EM}
    }
&
    \shortstack[c]{%
      \raisebox{-.15cm}{\includesvg[height=0.5cm]{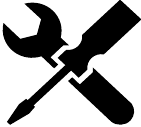}}\\[-0.2em]
      \ssw{Hardware}
    }
&
    \shortstack[c]{%
      \raisebox{-.15cm}{\includesvg[height=0.5cm]{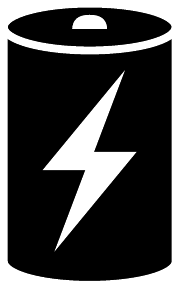}}\\[-0.2em]
      \ssw{Power}
    }
&
    \shortstack[c]{%
      \raisebox{-.15cm}{\includesvg[height=0.35cm]{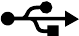}}\\[-0.1em]
      \ssw{Ports}
    }
&
    \dots
\end{array}

    \end{equation}
    \item \emph{Computational:}
    \begin{equation}
        \vspace{-1ex}
        \begin{array}{c@{\;\;\raisebox{0.2em}{$=$}\;\;}c@{\;\;\raisebox{0.2em}{$\otimes$}\;\;}c@{\;\;\raisebox{0.2em}{$\otimes$}\;\;}c@{\;\;\raisebox{0.2em}{$\otimes$}\;\;}c@{\;\;\raisebox{0.2em}{$\otimes$}\;\;}c}
    \shortstack[c]{%
      \raisebox{-.15cm}{\includesvg[height=0.5cm]{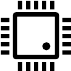}}\\[-0.2em]
      \ssw{Compute}
    }
&
    \shortstack[c]{%
      \raisebox{-.15cm}{\includesvg[height=0.5cm]{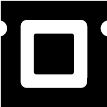}}\\[-0.2em]
      \ssw{CPU}
    }
&
    \shortstack[c]{%
      \raisebox{-.15cm}{\includesvg[height=0.35cm]{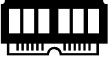}}\\[-0.1em]
      \ssw{GPU}
    }
&
    \shortstack[c]{%
      \raisebox{-.15cm}{\includesvg[height=0.5cm]{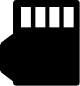}}\\[-0.2em]
      \ssw{Disk}
    }
&
    \shortstack[c]{%
      \raisebox{-.15cm}{\includesvg[height=0.35cm]{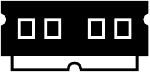}}\\[-0.1em]
      \ssw{RAM}
    }
&
    \dots
\end{array}

    \end{equation}
    \item \emph{Informational:}
    \begin{equation}
        \vspace{-1ex}
        \begin{array}{c@{\;\;\raisebox{0.2em}{$=$}\;\;}c@{\;\;\raisebox{0.2em}{$\otimes$}\;\;}c@{\;\;\raisebox{0.2em}{$\otimes$}\;\;}c@{\;\;\raisebox{0.2em}{$\otimes$}\;\;}c}
    \shortstack[c]{%
      \raisebox{-.15cm}{\includesvg[height=0.45cm]{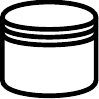}}\\[-0.2em]
      \ssw{Info}
    }
&
    \shortstack[c]{%
      \raisebox{-.15cm}{\includesvg[height=0.5cm]{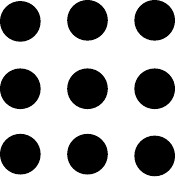}}\\[-0.18em]
      \ssw{ROS}
    }
&
    \shortstack[c]{%
      \raisebox{-.15cm}{\includesvg[height=0.5cm]{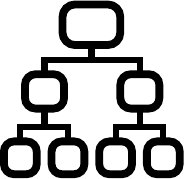}}\\[-0.2em]
      \ssw{BT.CPP}
    }
&
    \shortstack[c]{%
      \raisebox{-.15cm}{\includesvg[height=0.5cm]{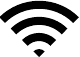}}\\[-0.2em]
      \ssw{Wi-Fi}
    }
&
    \dots
\end{array}

    \end{equation}
    \item \emph{External:}
    \begin{equation}
        \vspace{-1ex}
        \begin{array}{c@{\;\;=\;\;}c@{\;\;\otimes\;\;}c@{\;\;\otimes\;\;}c@{\;\;\otimes\;\;}c}
    \shortstack[c]{%
      \raisebox{-.15cm}{\includesvg[height=0.65cm]{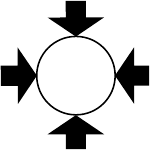}}\\[-0.2em]
      \ssw{External}
    }
&
    \shortstack[c]{%
      \raisebox{-.15cm}{\includesvg[height=0.65cm]{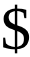}}\\[-0.4em]
      \ssw{Cost}
    }
&
    \shortstack[c]{%
      \raisebox{-.15cm}{\includesvg[height=0.5cm]{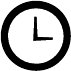}}\\[-0.2em]
      \ssw{Time}
    }
&
    \shortstack[c]{%
      \raisebox{-.15cm}{\includesvg[height=0.5cm]{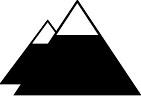}}\\[-0.2em]
      \ssw{Environment}
    }
&
    \dots
\end{array}

    \end{equation}
\end{itemize}
The complete system state $\state$ is then expressed as the tensor product of these partial states:
\begin{equation}
    \begin{array}{c@{\;\;\raisebox{0.2em}{$=$}\;\;}c@{\;\;\raisebox{0.2em}{$\otimes$}\;\;}c@{\;\;\raisebox{0.2em}{$\otimes$}\;\;}c@{\;\;\raisebox{0.2em}{$\otimes$}\;\;}c}
    \raisebox{0.2em}{$\state$}
&
    \shortstack[c]{%
      \raisebox{-.15cm}{\includesvg[height=0.5cm]{figures/images/ct/power.svg}}\\[-0.2em]
      \ssw{EM}
    }
&
    \shortstack[c]{%
      \raisebox{-.15cm}{\includesvg[height=0.5cm]{figures/images/ct/cpu.svg}}\\[-0.2em]
      \ssw{Compute}
    }
&
    \shortstack[c]{%
      \raisebox{-.15cm}{\includesvg[height=0.5cm]{figures/images/ct/datastore.svg}}\\[-0.2em]
      \ssw{Info}
    }
&
    \shortstack[c]{%
      \raisebox{-.15cm}{\includesvg[height=0.65cm]{figures/images/ct/external.svg}}\\[-0.2em]
      \ssw{External}
    }
\end{array}
.
\end{equation}
Notably, because $\cat$ is symmetric, the ordering of these atomic resources within $\state$ does not affect the representation.

\subsubsection{State Transitions as Morphisms}
Morphisms $\morph$ $\action: \state_\mathrm{in} \to \state_\mathrm{out}$ constitute the action space $\Actions$ within our state transition system.
This generalizes the many types of system mutations that could be expected in a reconfigurable robot, including changes to physical hardware, running software, and plan execution, into a unified definition.
\begin{equation}
\label{eq:example_morphism}
\begin{tikzpicture}[
        baseline=(current bounding box.center),
    ]
    \pgfdeclarelayer{background}
    \pgfsetlayers{background,main}
    
    \coordinate (root) at (0,0);

    \node[right=0cm of root](em) {
        \shortstack[c]{%
            \raisebox{-.15cm}{\includesvg[height=0.5cm]{figures/images/ct/power.svg}}\\[-0.2em]
        }%
    };
    \node[right=0cm of em](comp) {
        \shortstack[c]{%
            \raisebox{-.15cm}{\includesvg[height=0.5cm]{figures/images/ct/cpu.svg}}\\[-0.2em]
        }%
    };
    \node[right=0cm of comp](info) {
        \shortstack[c]{%
            \raisebox{-.15cm}{\includesvg[height=0.5cm]{figures/images/ct/datastore.svg}}\\[-0.2em]
        }%
    };
    \node[right=0cm of info](ext) {
        \shortstack[c]{%
            \raisebox{-.15cm}{\includesvg[height=0.5cm]{figures/images/ct/external.svg}}\\[-0.2em]
        }%
    };

    \coordinate (mid) at ($ (em.south)!0.5!(ext.south) $);
    \coordinate (lvl1) at ($ (mid) - (0,\vertsep)$);

    \node[box] (driver) at (mid |- lvl1) {%
        \shortstack[c]{%
            \scriptsize\sw{CameraDriver}%
        }%
    };
    \AddPorts{driver}{basic/em/, basic/comp/, basic/info/, basic/ext/}

    \coordinate (lvl2) at ($ (driver.south) - (0,\vertsep)$);
    \node[anchor=north](em') at (em |- lvl2) {
        \raisebox{-.15cm}{\includesvg[height=0.5cm]{figures/images/ct/power.svg}}
    };
    \node[anchor=north](comp') at (comp |- lvl2) {
        \raisebox{-.15cm}{\includesvg[height=0.5cm]{figures/images/ct/cpu.svg}}
    };
    \node[anchor=north](info') at (info |- lvl2) {
        \raisebox{-.15cm}{\includesvg[height=0.5cm]{figures/images/ct/datastore.svg}}
    };
    \node[anchor=north](ext') at (ext |- lvl2) {
        \raisebox{-.15cm}{\includesvg[height=0.5cm]{figures/images/ct/external.svg}}
    };

    \begin{pgfonlayer}{background}
        \draw[wire] (em) to (em-in);
        \draw[wire] (em-out) to (em');
        \draw[wire] (comp) to (comp-in);
        \draw[wire] (comp-out) to (comp');
        \draw[wire] (info) to (info-in);
        \draw[wire] (info-out) to (info');
        \draw[wire] (ext) to (ext-in);
        \draw[wire] (ext-out) to (ext');
    \end{pgfonlayer}

    \coordinate(blowout) at ($(driver.east) + (1.0,0.0)$);
    \node[font=\scriptsize, align=left, fill=gray!10, anchor=west, outer sep=0pt] (note) at (blowout) {
        \textbf{Preconditions}\\
        \textbullet\ $\sw{Hardware} \ni \sw{Camera}$ \\
        \textbullet\ $\sw{RAM} \geq \sw{RAM}_\text{req}$\\
        \textbullet\ \dots\\
        \textbf{Postconditions}\\
        \textbullet\ $\sw{RAM} := \sw{RAM} - \sw{RAM}_\text{req}$\\
        \textbullet\ $\sw{ROS} := \sw{ROS} \cup \sw{\{ImageTopic\}}$\\
        \textbullet\ \dots
    };
    \fill[gray!10]
      (driver.east |- driver.north)
      -- (note.west |- note.north)
      -- (note.west |- note.south)
      -- (driver.east |- driver.south)
      -- cycle;
    
\end{tikzpicture}
\end{equation}
As expressed in \Cref{eq:example_morphism}, each morphism may contain a set of preconditions that query the input resources and must hold for the morphism to be applicable at that state, and a set of postconditions that produce updated system resources if that morphism is applied.
Composite systems are formed by applying serial ($\action_2 \circ \action_1$) and parallel ($\action_1 \otimes \action_2$) compositions of atomic morphisms indexed by discrete steps $t_1, \dots, t_n$:
\begin{equation}
\label{eq:example_composition}
\begin{array}{c@{\qquad}c}
    \begin{tikzpicture}[
        baseline=(current bounding box.center),
    ]
    \pgfdeclarelayer{background}
    \pgfsetlayers{background,main}
    
    \coordinate (root) at (0,0);

    \node[right=0cm of root](em) {
        \shortstack[c]{%
            \raisebox{-.15cm}{\includesvg[height=0.5cm]{figures/images/ct/power.svg}}\\[-0.2em]
        }%
    };
    \node[right=0cm of em](comp) {
        \shortstack[c]{%
            \raisebox{-.15cm}{\includesvg[height=0.5cm]{figures/images/ct/cpu.svg}}\\[-0.2em]
        }%
    };
    \node[right=0cm of comp](info) {
        \shortstack[c]{%
            \raisebox{-.15cm}{\includesvg[height=0.5cm]{figures/images/ct/datastore.svg}}\\[-0.2em]
        }%
    };
    \node[right=0cm of info](ext) {
        \shortstack[c]{%
            \raisebox{-.15cm}{\includesvg[height=0.5cm]{figures/images/ct/external.svg}}\\[-0.2em]
        }%
    };

    \coordinate (mid) at ($ (em.south)!0.5!(ext.south) $);
    \coordinate (lvl1) at ($ (mid) - (0,\vertsep)$);

    \node[box] (camera) at (mid |- lvl1) {%
        \shortstack[c]{%
            \ssw{Camera}%
        }%
    };
    \AddPorts{camera}{readfwrite/camera-em/, readwrite/camera-ext/}

    \coordinate (lvl2) at ($ (camera.south) - (0,\vertsep)$);
    \node[box] (camera_driver) at (mid |- lvl2) {%
        \shortstack[c]{%
            \ssw{CameraDriver}%
        }%
    };
    \AddPorts{camera_driver}{read/camera_driver-em/, readwrite/camera_driver-comp/, fwrite/camera_driver-info/, readwrite/camera_driver-ext/}

    \coordinate (lvl3) at ($ (camera_driver.south) - (0,\vertsep)$);
    \coordinate (mid) at ($ (info.south)!0.5!(ext.south) $);
    \node[box] (take_photo) at (mid |- lvl3) {%
        \shortstack[c]{%
            \ssw{TakePhoto}%
        }%
    };
    \AddPorts{take_photo}{freadfwrite/take_photo-info/, readwrite/take_photo-ext/}

    \coordinate (lvl4) at ($ (take_photo.south) - (0,\vertsep)$);
    \node[anchor=north](em') at (em |- lvl4) {
        \raisebox{-.15cm}{\includesvg[height=0.5cm]{figures/images/ct/power.svg}}
    };
    \node[anchor=north](comp') at (comp |- lvl4) {
        \raisebox{-.15cm}{\includesvg[height=0.5cm]{figures/images/ct/cpu.svg}}
    };
    \node[anchor=north](info') at (info |- lvl4) {
        \raisebox{-.15cm}{\includesvg[height=0.5cm]{figures/images/ct/datastore.svg}}
    };
    \node[anchor=north](ext') at (ext |- lvl4) {
        \raisebox{-.15cm}{\includesvg[height=0.5cm]{figures/images/ct/external.svg}}
    };

    \begin{pgfonlayer}{background}
        \draw[wire] (em) to (camera-em-in);
        \draw[wire] (ext) to (camera-ext-in);
        \draw[wire] (camera-em-out) to (camera_driver-em-in);
        \draw[wire] (camera-ext-out) to (camera_driver-ext-in);
        \draw[wire] (comp) to (camera_driver-comp-in);
        \draw[wire] (info) to (camera_driver-info-in);
        \draw[wire] (camera_driver-em-out) to (em');
        \draw[wire] (camera_driver-comp-out) to (comp');
        \draw[wire] (camera_driver-info-out) to (take_photo-info-in);
        \draw[wire] (camera_driver-ext-out) to (take_photo-ext-in);
        \draw[wire] (take_photo-info-out) to (info');
        \draw[wire] (take_photo-ext-out) to (ext');
    \end{pgfonlayer}

    \coordinate(t) at (-0.5, 0);
    \node(t1) at (t |- camera) {$t_1$};
    \node(t2) at (t |- camera_driver) {$t_2$};
    \node(t3) at (t |- take_photo) {$t_3$};
    \draw[dotted] (t1) -- (camera.west);
    \draw[dotted] (t2) -- (camera_driver.west);
    \draw[dotted] (t3) -- (take_photo.west);

\end{tikzpicture}
&
    \begin{tikzpicture}[
    baseline=(current bounding box.center),
    every node/.style={font=\scriptsize}
]

\pgfdeclarelayer{background}
\pgfdeclarelayer{last}
\pgfsetlayers{last,background,main}


\coordinate (lvl) at ($ (0,0) - (0,1.75*\vertsep)$);
\node[] (p1) at (lvl) {};
\AddPorts{p1}{read/1/}
\node[expl, right=0.3cm of p1.east, yshift=5pt](expl1) {Precondition};

\coordinate (lvl) at ($ (p1) - (0,\vertsep)$);
\node[] (p2) at (lvl) {};
\AddPorts{p2}{write/1/}
\node[expl, right=0.3cm of p2.east, yshift=2pt](expl2) {Postcondition};

\coordinate (lvl) at ($ (p2) - (0,1.75*\vertsep)$);
\node[](p3_1) at ($ (lvl) + (-0.2,0) $) {};
\node[](p3_2) at ($ (lvl) + (0.2,0) $) {};
\AddPorts{p3_1}{fread/1/}
\AddPorts{p3_2}{fwrite/p2/}

\node[expl, yshift=3.5pt](expl3) at (expl2.west |- p3_2) {Free Parameters\\within Conditions};


\begin{pgfonlayer}{last}
    \node[
      fit=(p1) (expl1) (p2) (expl2) (p3_1) (p3_2) (expl3),
      draw,
      fill=gray!10,
      rounded corners,
      label=north:{\textbf{Port Legend}},
      inner sep=2pt
    ] {};
\end{pgfonlayer}

\end{tikzpicture}
\end{array}
\end{equation}
We use port symbols shown in \Cref{eq:example_composition} to indicate whether a morphism imposes preconditions and/or postconditions on an input or output resource in the remaining string diagrams in this paper.
The presence of free parameters within preconditions and postconditions is further discussed in \Cref{subsec:approach:satisfiability}.

\subsubsection{Planning Objective}
With the states and actions in our state transition system mapped to categorical representations, the objective becomes to find a valid composition of morphisms from $\Actions$ using $\circ$ and $\otimes$ that transforms an initial state $\state_\init$ into a goal state $\state_\goal \in \States_\goal$.
Formally stated, we solve if
\begin{equation}
    \exists \pi \in \langle \Actions \rangle_{\circ, \otimes} \quad \text{s.t.}\;\; \pi: \state_\init \to \state_\goal \in \States_\goal.
\end{equation}

\subsection{Contextual Reasoning via Projection Functors}
While it is powerful to capture the robot's entire complex state in a unified representation, it is often practical to consider only some reduced version of that state that removes elements unnecessary for the current analysis.
For example, considering the available hardware interfaces on a robot is critical for physical reconfiguration, but is unnecessary when solving for a new software state over a fixed physical morphology.
We formalize this ``view-switching'' using \emph{monoidal functors} $F_\view: \cat \to \cat_\view$ that project the overall system category $\cat$ into view-specific categories $\cat_\view$.

Let $\view$ be a specific lifecycle stage (e.g. software installation).
If a resource $\resource$ (like available hardware interfaces) is irrelevant to that stage, the functor $F_\view$ maps it to the monoidal unit $I$: $F_\view(\resource) = I$.
Because $\state = I \otimes \resource_2 = \resource_2$ from \Cref{eq:ct:strict_properties}, this effectively removes the resource $\resource_1$ and its associated constraints from $\state$ without altering the underlying topological structure of the composition.

\begin{equation}
\begin{array}{c@{\quad}c@{\quad}c}
    \begin{tikzpicture}[
        baseline=(current bounding box.center),
    ]
    \pgfmathsetmacro{\opacity}{0.07}

    \pgfdeclarelayer{background}
    \pgfsetlayers{background,main}
    
    \coordinate (root) at (0,0);
    \node[right=0cm of root](power) {
        \shortstack[c]{%
            \raisebox{-.15cm}{\includesvg[height=0.5cm]{figures/images/ct/power.svg}}\\[-0.2em]
        }%
    };
    \node[right=0cm of power](compute) {
        \shortstack[c]{%
            \raisebox{-.15cm}{\includesvg[height=0.5cm]{figures/images/ct/cpu.svg}}\\[-0.2em]
        }%
    };
    \begin{scope}[opacity=\opacity]
    \node[right=0cm of compute](data) {
        \shortstack[c]{%
            \raisebox{-.15cm}{\includesvg[height=0.5cm]{figures/images/ct/datastore.svg}}\\[-0.2em]
        }%
    };
    \end{scope}
    \node[right=0cm of data](external) {
        \shortstack[c]{%
            \raisebox{-.15cm}{\includesvg[height=0.5cm]{figures/images/ct/external.svg}}\\[-0.2em]
        }%
    };

    \coordinate(mid) at ($ (power.south)!0.475!(external.south) $);
    \node[box] (robot) at ($(mid) - (0,\vertsep)$) {%
        \shortstack[c]{%
            \ssw{Robot}%
        }%
    };
    \AddPorts{robot}{freadfwrite/q-power/,freadfwrite/q-compute/,basic/q-data/,freadfwrite/q-external/}

    \coordinate (lvl) at ($ (robot.south) - (0,\vertsep)$);
    \node[anchor=north](power') at (power |- lvl) {
        \raisebox{-.15cm}{\includesvg[height=0.5cm]{figures/images/ct/power.svg}}
    };
    \node[anchor=north](compute') at (compute |- lvl) {
        \raisebox{-.15cm}{\includesvg[height=0.5cm]{figures/images/ct/cpu.svg}}
    };
    \begin{scope}[opacity=\opacity]
    \node[anchor=north](data') at (data |- lvl) {
        \raisebox{-.15cm}{\includesvg[height=0.5cm]{figures/images/ct/datastore.svg}}
    };
    \end{scope}
    \node[anchor=north](external') at (external |- lvl) {
        \raisebox{-.15cm}{\includesvg[height=0.5cm]{figures/images/ct/external.svg}}
    };

    \begin{pgfonlayer}{background}
        \draw[wire] (power) to (q-power-in);
        \draw[wire] (q-power-out) to (power');
        \draw[wire] (compute) to (q-compute-in);
        \draw[wire] (q-compute-out) to (compute');
        \begin{scope}[opacity=\opacity]
        \draw[wire] (data) to (q-data-in);
        \draw[wire] (q-data-out) to (data');
        \end{scope}
        \draw[wire] (external) to (q-external-in);
        \draw[wire] (q-external-out) to (external');
    \end{pgfonlayer}

\end{tikzpicture}
&
    \begin{tikzpicture}[
        baseline=(current bounding box.center),
    ]
    \pgfmathsetmacro{\opacity}{0.07}

    \pgfdeclarelayer{background}
    \pgfsetlayers{background,main}
    
    \coordinate (root) at (0,0);
    \begin{scope}[opacity=\opacity]
    \node[right=0cm of root](power) {
        \shortstack[c]{%
            \raisebox{-.15cm}{\includesvg[height=0.5cm]{figures/images/ct/power.svg}}\\[-0.2em]
        }%
    };
    \end{scope}
    \node[right=0cm of power](compute) {
        \shortstack[c]{%
            \raisebox{-.15cm}{\includesvg[height=0.5cm]{figures/images/ct/cpu.svg}}\\[-0.2em]
        }%
    };
    \node[right=0cm of compute](data) {
        \shortstack[c]{%
            \raisebox{-.15cm}{\includesvg[height=0.5cm]{figures/images/ct/datastore.svg}}\\[-0.2em]
        }%
    };
    \begin{scope}[opacity=\opacity]
    \node[right=0cm of data](external) {
        \shortstack[c]{%
            \raisebox{-.15cm}{\includesvg[height=0.5cm]{figures/images/ct/external.svg}}\\[-0.2em]
        }%
    };
    \end{scope}

    \coordinate(mid) at ($ (power.south)!0.5!(external.south) $);
    \node[box] (robot) at ($(mid) - (0,\vertsep)$) {%
        \shortstack[c]{%
            \ssw{Robot}%
        }%
    };
    \AddPorts{robot}{basic/q-power/,freadfwrite/q-compute/,freadfwrite/q-data/,basic/q-external/}

    \coordinate (lvl) at ($ (robot.south) - (0,\vertsep)$);
    \begin{scope}[opacity=\opacity]
    \node[anchor=north](power') at (power |- lvl) {
        \raisebox{-.15cm}{\includesvg[height=0.5cm]{figures/images/ct/power.svg}}
    };
    \end{scope}
    \node[anchor=north](compute') at (compute |- lvl) {
        \raisebox{-.15cm}{\includesvg[height=0.5cm]{figures/images/ct/cpu.svg}}
    };
    \node[anchor=north](data') at (data |- lvl) {
        \raisebox{-.15cm}{\includesvg[height=0.5cm]{figures/images/ct/datastore.svg}}
    };
    \begin{scope}[opacity=\opacity]
    \node[anchor=north](external') at (external |- lvl) {
        \raisebox{-.15cm}{\includesvg[height=0.5cm]{figures/images/ct/external.svg}}
    };
    \end{scope}

    \begin{pgfonlayer}{background}
        \begin{scope}[opacity=\opacity]
        \draw[wire] (power) to (q-power-in);
        \draw[wire] (q-power-out) to (power');
        \draw[wire] (external) to (q-external-in);
        \draw[wire] (q-external-out) to (external');
        \end{scope}
        \draw[wire] (compute) to (q-compute-in);
        \draw[wire] (q-compute-out) to (compute');
        \draw[wire] (data) to (q-data-in);
        \draw[wire] (q-data-out) to (data');
    \end{pgfonlayer}

\end{tikzpicture}
\end{array}
\end{equation}

This allows the same high-level system model to be solved, verified, or optimized against multiple distinct views by simply swapping the projection functor.
While our approach is capable of solving for the entire end-to-end system composition, these functors allow the state to be collapsed into any subset of desired resources.
This effectively allows the framework to emulate traditionally decoupled stages, such as hardware design, software integration, or behavior planning, by narrowing the focus of the solver to only the resources relevant during each stage.

\subsection{Mapping to Satisfiability with Free Variable Synthesis}
\label{subsec:approach:satisfiability}

To generate a valid planning solution $\pi$, we map the categorical model to an SMT problem.

\subsubsection{The Logic Formula}
An initial state $\state_\init$ is parameterized by a set of literals $l \in \state_\init$.
Likewise, the space of possible goal states is parameterized by $l \in \States_\goal$.
For each morphism $\action \in \Actions$, we capture the atomic literals within its preconditions $l \in \pre_\action$ and postconditions $l \in \post_\action$.
Let the Boolean domain be $\mathbb{B} := \{0,1\}$.
Over a planning horizon $k$, we search for an assignment to Boolean \emph{activation variables} $\activation \in \Activations$ where $\Activations \in \mathbb{B}^{|\Actions| \times k}$ within the formula
\begin{equation}
\label{eq:logic_formula}
    \begin{array}{c}
         T(t) = \displaystyle\bigwedge_{\action \in \Actions} \left( \activation_\action(t) \Rightarrow \left( \displaystyle\bigwedge_{\literal \in \pre_\action} \literal(t) \land \displaystyle\bigwedge_{\literal \in \post_\action} \literal(t+1) \right) \right) \\
         \formula = \displaystyle\bigwedge_{\literal \in \state_\init} \literal(0) \land \displaystyle\bigwedge_{0 \leq t < k} T(t) \land \displaystyle\bigwedge_{\literal \in \States_\goal} l(k)
    \end{array}
\end{equation}
such that the resulting formula is satisfied ($\formula \in \sat$).
$T(t)$ encodes the transition logic, and an activation $\activation_\action(t)$ is only valid if all literals $l \in \pre_\action$ are satisfied at the step $t$ and subsequently applies all literals $l \in \post_\action$ at $t+1$.
Frame axioms additionally enforce that any literal not appearing as a postcondition of an activated action is retained across the transition.
Formally, for the set of all literals $\Literals$ and time step $t$:
\begin{equation}
    F(t) = \displaystyle\bigwedge_{\literal \in \Literals} \left( \left( \literal(t) \neq \literal(t+1) \right) \implies \displaystyle\bigvee_{\substack{\action \in \Actions \\ \literal \in \post_\action}} \activation_\action(t) \right).
\end{equation}

\subsubsection{Free Variable Synthesis}
We leverage the ability of SMT to reason over more complex datatypes beyond Booleans to synthesize missing information influencing the composition.
Thus, in addition to the Boolean activation variables $\activation \in \Activations$, we simultaneously reason over two types of free variables:
\begin{itemize}
    \item \emph{Parametric free variables} ($\param \in \Params$):
    These represent free variables within the preconditions and/or postconditions of morphisms.
    For example, rather than defining a strict amount of \sw{RAM} consumption for a specific piece of software in the precondition literal $\sw{RAM} \geq \sw{RAM}_\req$, $\sw{RAM}_\req$ can be left as a free variable $\param_\req$ that is simultaneously solved within the composition. Because these parameters vary in type based on the underlying resource, we define the heterogeneous tuple $\Params = (\param_1, \dots, \param_n)$ where each $\param_i$ is drawn from a specific SMT sort $\mathcal{D}_i \in \{\mathbb{R}, \mathbb{Z}, \mathbb{B}, \mathrm{String}, \mathrm{Enum}, \mathrm{Set}, \dots \}$.
    In this way, parametric free variables can be used to solve for design-time targets within morphisms or can be left free into deployment to solve for composition-dependent instantiations including physical cable connections, software parameters, or the names of instances of data in memory, for example.
    \item \emph{Structural free variables} ($\struct \in \Structs$):
    These represent entire morphisms within a composition that are left free.
    If a composition is invalid because of unsatisfiable preconditions, a structural free variable $\struct$ can be introduced to represent a hypothetical component and its preconditions and postconditions solved such that the overall composition is satisfied.
    In this way, structural free variables can be used to create design targets for morphisms that, if they existed, would render a composition valid.
    The space of structural free variables $\Structs$ is constrained by the current projection $F_\view$.
\end{itemize}

\subsubsection{Finite Grounding}
\label{subsubsec:approach:grounding}

The parametric free variables $\param \in \Params$ can range over symbolic sorts whose vocabularies are \emph{open}, including \sw{String} and \sw{Set} types.
For example, the name of a ROS topic, the identifier of an instance of data in memory, or the class of a detectable object using an open-vocabulary model \cite{yoloe_2025} should not be restricted to a finite set of values specified at design time.
Pre-enumeration of all possible variable assignments within the domain is an expensive and brittle process in the best case, and in the context of a recompositional robot whose structure is frequently changing, it is often impossible to know the full set of possible values at design time.
In practice, our solver also supports shell execution within condition evaluation, and arbitrary code execution means that the set of possible values for a variable may be infinite.
Reasoning over an infinite domain, however, is generally undecidable.
Our approach therefore grounds each symbolic sort over a finite universe $H$ built from the symbolic constants occurring across the design-time domain and the generated problem definition along with one fresh reserve element per free variable.
Our encoding only supports equality, disequality, and set membership constraints over these symbolic sorts, which makes the finite grounding sound and improves solver performance by reducing the search space.

\begin{proposition}[Finite Grounding]
\label{prop:grounding}
    Let $\formula$ be a formula of \Cref{eq:logic_formula} that is quantifier-free over its symbolic sorts, whose atoms over those sorts are restricted to equality, disequality, and set membership, and whose set terms are built by only adding or removing single set elements.
    Let $H$ be the set of symbolic constants occurring in $\formula$ extended with one fresh reserve element per free symbolic variable.
    Then, for every universe $U \supseteq H$, $\formula$ is satisfiable over $U$ if and only if it is satisfiable over $H$.
\end{proposition}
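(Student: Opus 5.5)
We prove the two directions separately, using the standard small-model argument for the quantifier-free theory of equality extended with finite sets. We regard $\formula$ as a quantifier-free formula over a single symbolic sort; with several sorts the argument is applied to each sort independently. Let $c_1,\dots,c_m$ be the constants occurring in $\formula$, let $x_1,\dots,x_n$ be its free symbolic variables, and let $h_1,\dots,h_n$ be the reserve elements, so that $H=\{c_1,\dots,c_m,h_1,\dots,h_n\}$. Set-valued terms are built from constant sets, or from set-valued free variables, by repeatedly applying $\cdot\cup\{e\}$ and $\cdot\setminus\{e\}$. Here every $e$ is a constant or an element variable, and atoms have the form $e=e'$, $e\neq e'$, or $e\in S$. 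One caveat is needed. A set-valued free variable ranges over subsets of the universe, so for the statement to hold we read each set-valued free variable as ranging over subsets of $U$. We take its elements as given, since in the encoding such variables are either fixed initial-state sets with constant elements or are each introduced with an element variable. If needed, I would make this explicit by requiring the set sorts to be interpreted as subsets of $H$-definable elements.

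\emph{($H\Rightarrow U$).} Take a satisfying assignment $\sigma$ over $H$. Since $H\subseteq U$, the same assignment is also an assignment over $U$. Every atom is evaluated only through the identities of elements and through membership in sets obtained by finitely many insertions and deletions. None of these operations depends on the elements of $U\setminus H$. Hence every atom has the same truth value under both readings, and a structural induction on terms, followed by one on the Boolean structure, gives the result.

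\emph{($U\Rightarrow H$).} This is the main step. Take a satisfying $\sigma$ over $U$, and define a map $\rho:U\to H$. Let $\rho$ fix each constant $c_i$. The values $\sigma(x_j)$ that are not among the constants form at most $n$ distinct elements of $U$; send these injectively to the reserve elements $h_1,\dots,h_n$. Send every remaining element of $U$ to some arbitrary element of $H$. Then $\rho$ is injective on the set $V=\{c_i\}\cup\{\sigma(x_j)\}$ of values actually named by element terms. We define $\sigma'=\rho\circ\sigma$ on element variables, and for set variables we take the image $\rho[\sigma(X)]$.

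The key lemma is proved by induction on set terms $S$: for every element term $e$,
\begin{equation*}
\sigma(e)\in\sigma(S) \iff \rho(\sigma(e))\in\sigma'(S).
\end{equation*}
The base case is an atomic set (a constant or a variable). This is where injectivity on $V$ matters, together with the requirement that elements of atomic sets that may be compared against lie in $V$. The requirement holds for constant sets, and we impose it for set variables as discussed in the caveat above. The inductive cases, insertion $\cup\{e'\}$ and removal $\setminus\{e'\}$, follow because $\rho$ is injective on $V$ and both $e$ and $e'$ take values in $V$. The removal case is the delicate one: we must rule out the situation where some element outside $V$ in the original set collapses onto $\rho(\sigma(e))$ and spuriously survives the removal. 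Restricting atomic-set elements to $V$ excludes exactly this. Equalities and disequalities among element terms are preserved because $\rho$ is injective on $V$. The Boolean structure, including the activation variables and the numeric atoms, which do not involve the symbolic sort, is left unchanged. Therefore $\sigma'$ satisfies $\formula$ over $H$.
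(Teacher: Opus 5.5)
Your $(H\Rightarrow U)$ direction is the paper's inclusion argument, and your map $\rho$ (fix the constants, send the remaining named values injectively to reserve elements) is the paper's injection. The gap is in $(U\Rightarrow H)$. There you do not prove the statement; you add a hypothesis that every set variable's value consists only of elements of $V$. An arbitrary model over $U$ need not satisfy this. A free parameter $\param$ of sort $\mathrm{Set}$, or a set-valued state literal at some $t>0$ constrained only through membership atoms, may contain arbitrary elements of $U\setminus V$. Your pushforward $\rho[\sigma(X)]$ then fails already in the base case, not just for removal. If $\sigma(X)=\{u\}$ with $u\notin V$ and $\rho(u)=c_1$, the true literal $\neg(c_1\in X)$ becomes false. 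Appealing to how ``the encoding'' introduces set variables, or restricting set sorts to $H$-definable elements, proves a weaker proposition than the one stated.

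The missing idea is the pruning step the paper performs before applying the injection. Membership is only ever tested against values of element terms, so you may replace each set value by its intersection with $V$, i.e., define $\sigma'(X)=\rho[\sigma(X)\cap V]$. Every inserted or removed element $e'$ has $\sigma(e')\in V$, and $\rho$ is injective on $V$. Induction on set terms then gives $\sigma'(S)=\rho[\sigma(S)\cap V]$ for every $S$; constant sets are fixed because their elements are constants, which lie in $V$ and are fixed by $\rho$. Hence $\rho(\sigma(e))\in\sigma'(S)\iff\sigma(e)\in\sigma(S)$ for every element term $e$. With this one change your key lemma holds unconditionally and the caveat can be dropped. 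The identity also carries positive set equalities such as $X(t{+}1)=X(t)\cup\{e\}$ through the translation.
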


\begin{proof}[Proof Sketch]
    The admitted atoms depend on symbolic values only through equality, disequality, and set membership.
    Injectivity carries equality and disequality both ways, and taking images commutes with set membership and with adding or removing single elements.
    Therefore, any injection of the symbolic universe that fixes the constants of $\formula$ preserves and reflects every atom.

    \noindent($\Leftarrow$) The inclusion $H \hookrightarrow U$ is such an injection, so any model over $H$ is also a model over $U$.

    \noindent($\Rightarrow$) In a model over $U$, membership is tested only against the element terms occurring in $\formula$, so we may drop from each set every element unequal to all of them without changing any atom.
    Every symbolic value then is a constant of $\formula$ or the value of a free variable, the latter contributing at most one distinct element each.
    An injection fixing the constants and sending these to distinct reserve elements of $H$ yields a model over $H$. 
\end{proof}

The result of \Cref{prop:grounding} is that a value never seen at modeling time requires no re-enumeration because the grounding is built from the problem and domain at runtime rather than just from the domain at modeling time, meaning admitting an unmodeled value requires only a single problem-file update rather than a model change.
This advantage is discussed in \Cref{subsec:eval:perturbation}.

\subsubsection{Optimization}
Optionally, this can be formulated as an optimization problem, where literals associated with certain resources can be applied as soft constraints to form $\formula_\soft$ and the remaining literals form $\formula_\hard$.
In this case, we solve for activations and free parametric and structural variables that satisfy the hard constraints $\formula_\hard$ while minimizing a cost function.
The cost function is composed of penalties for violating the soft constraints, where each soft constraint $\formula_{\soft_i}$ has an associated violation indicator $z_i$ and weight $w_i$, and an auxiliary objective $c$ that can be declared to maximize a particular functionality or minimize the consumption of a particular resource:
\begin{equation}
\label{eq:optimization}
\begin{aligned}
    x = \arg\min_{\activation, \param, \struct} \quad & c(\activation, \param, \struct) + \sum_{i=1}^m w_i z_i \\
    \text{subject to} \quad
    & \formula_\hard(\activation, \param, \struct) \in \sat \\
    & z_i \iff \neg \formula_{\soft_i}(\activation, \param, \struct), \quad \forall i \in \{1, \dots, m\} \\
    & \activation \in \Activations, \param \in \Params, \struct \in \Structs.
\end{aligned}
\end{equation}

\subsection{Lifelong Queries over the Model}
\label{subsec:approach:queries}

\Cref{eq:logic_formula} captures the composition as a conjunction of labeled constraints, which allows the model to support queries beyond plan existence.
Take $\formula = \bigwedge_{\conjunct \in \Conjuncts} \conjunct$, where $\Conjuncts = \Conjuncts_\init \cup \Conjuncts_{\mathrm{trans}} \cup \Conjuncts_\goal$ is the set of all conjuncts in \Cref{eq:logic_formula} and let a solution be the assignment $x = (\activation, \param, \struct)$ of \Cref{eq:optimization}.
When an update to any component captured in the model replaces $\formula$ with $\formula'$, three questions can be answered, each of which is a minimality query over the same formula:

\subsubsection{Diagnose: why is the composition infeasible?}
When $\formula' \in \unsat$, we explain the failure relative to the composition.
An explanation is an irreducible subset $\mus \subseteq \Conjuncts$ such that
\begin{equation}
\label{eq:diagnose}
    \bigwedge_{\conjunct \in \mus} \conjunct \in \unsat
    \quad \text{and} \quad
    \forall\, \conjunct \in \mus:\;
    \bigwedge_{\psi \in \mus \setminus \{\conjunct\}} \psi \in \sat.
\end{equation}
Given the composition, every constraint in $\mus$ is necessary for the infeasibility, and removing any one restores satisfiability.
$\mus$ is irreducible but may not have minimum cardinality and $\formula'$ may admit several such subsets.

\subsubsection{Relax: what is the minimal restoration?}
Let $\Conjuncts_\req \subseteq \Conjuncts$ be the requirements eligible for weakening.
For each requirement $\rho \in \Conjuncts_\req$ of the form $\rho \equiv (q_\rho \geq \resource_{\req,\rho})$ where $q_\rho$ denotes the current level of a resource governed by $\rho$, introduce a slack variable $\delta_\rho \geq 0$ yielding the weakened literal $\rho(\delta_\rho) \equiv (q_\rho + \delta_\rho \geq \resource_{\req,\rho})$. 
The minimal restoration is then:
\begin{equation}
\label{eq:relax}
\begin{aligned}
    \delta^{*} = \arg\min_{\delta \geq 0} \quad
        & \sum_{\rho \in \Conjuncts_\req} w_\rho\, \delta_\rho \\
    \text{subject to} \quad
        & \bigwedge_{\conjunct \in \Conjuncts \setminus \Conjuncts_\req} \conjunct
          \;\land\;
          \bigwedge_{\rho \in \Conjuncts_\req} \rho(\delta_\rho)
          \;\in\; \sat ,
\end{aligned}
\end{equation}
an instance of \Cref{eq:optimization} in which the weakened requirements form the soft set $\formula_\soft$.
The solver returns both the magnitude $\delta^*$ and, in the discrete case, the minimal set of requirements whose removal restores feasibility.
Importantly, minimality is discovered rather than designated.
\Cref{eq:relax} ranges over all of $\Conjuncts_\req$ simultaneously, whereas the replanning baselines in \Cref{subsec:eval:queries} must re-solve a bisection loop per candidate parameter.

\subsubsection{Recompose: how much must the deployed system change?}
In the recompositional robot problem, the current system state is an incumbent configuration that may be expensive in terms of time, effort, or both, to change.
Thus, when $\formula' \in \unsat$, it is practically useful to incorporate the incumbent state and minimize the cost of required changes when solving for an updated composition. 
Let $\Actions_0 \subseteq \Actions$ be the components of the currently deployed configuration, and let $\Actions_\activation \subseteq \Actions$ denote the configuration induced by a candidate solution.
The recomposition cost of the candidate solution is its weighted churn
\begin{equation}
\label{eq:churn}
    \Delta\!\left(\activation\right)
    = \sum_{\action \,\in\, \Actions_\activation \,\triangle\, \Actions_0}
      w_\action ,
\end{equation}
or the symmetric difference between the candidate and incumbent compositions.
The churn is weighted by each component's replacement cost $w_\action$ to capture the practical costs of reconfiguration, such as mounting a new piece of hardware, connecting cables for power and communication, restarting a piece of software, or modifying a behavior tree.
Minimal-change recomposition then solves the lexicographic program
\begin{equation}
\label{eq:reconfigure}
\begin{aligned}
    x^{*} =
    \arg \operatorname*{lex\,min}_{\activation,\, \param,\, \struct} \quad
    &\bigl( \Delta(\activation),\; c(\activation, \param, \struct) \bigr)
    \\
    \text{subject to} \quad
    &\formula_\hard(\activation, \param, \struct) \in \sat \\
    & \activation \in \Activations, \param \in \Params, \struct \in \Structs ,
\end{aligned}
\end{equation}
which holds the deployment fixed except where an update forces a change, breaking ties by the mission objective $c$ of \Cref{eq:optimization}.
By reversing the priority order to $\operatorname{lex\,min}\bigl(c, \Delta\bigr)$, the same formulation instead recovers the cost-optimal solution regardless of the incumbent configuration.
Agency over the objective prioritization makes the optimality-versus-stability tradeoff an explicit choice that can be determined based on the current context of the system (\Cref{tab:queries}). 

Lifelong recomposition based on these queries takes the form of \Cref{alg:lifelong}, which is used in the demonstration in \Cref{subsec:applications:runtime}.

\begin{algorithm}[t]
\caption{Lifelong recomposition loop (onboard, \Cref{fig:demo})}
\label{alg:lifelong}
\begin{algorithmic}[1]
\Require model, incumbent components $\Actions_0$, projection $F_\view$
\While{mission active}
    \State $\formula' \gets$ encode sensed state, task, catalog \hfill\Cref{eq:logic_formula}
    \If{$\Actions_0$ satisfies $\formula'$} \Comment{composition still valid}
        \State \textbf{continue}
    \ElsIf{$\formula' \in \sat$}
        \State $x^{*} \gets$ minimal-churn re-solve \hfill\Cref{eq:reconfigure}
    \Else
        \State report cause $\mus$ to operator \hfill\Cref{eq:diagnose}
        \State find minimal restoration $\delta^{*}$ \hfill\Cref{eq:relax}
        \State await approval; weaken $\formula' \gets \formula'(\delta^{*})$
        \State $x^{*} \gets$ minimal-churn re-solve \hfill\Cref{eq:reconfigure}
    \EndIf
    \State enact $x^{*}$;\; $\Actions_0 \gets \Actions_\activation$ \Comment{updated composition}
\EndWhile
\end{algorithmic}
\end{algorithm}

\section{Applications}
\label{sec:applications}

We demonstrate the utility of our approach in a multi-stage search-and-rescue application.
We first use our solver as a design-time tool to synthesize specific software and behavioral components and subcompositions, then integrate it into a symbolic robot model running onboard a physical robot that allows it to dynamically recognize and adapt to changing task, environmental, and internal constraints online.

\subsection{Design-Time Component Engineering}
\label{subsec:applications:design}
The search-and-rescue application requires the robot to traverse a facility without a prior map, identify any people within the facility, approach them, and assess their condition before sharing their location and the generated assessment with a remote operator.
Our rudimentary condition assessment approach uses a speech-to-text (STT) model to capture any spoken audio from the person and a vision-language model (VLM) that processes an image captured with the robot's camera and the transcribed text from the STT model to generate a textual description of the person's perceived condition and need for assistance.

Because both the STT model and VLM are expensive to run locally, especially alongside other GPU-native processes, we first use our solver to design a subcomposition of STT model and VLM that can run locally alongside all other software required for the search-and-rescue mission.
For the deployment in \Cref{fig:demo}, we use an NVIDIA Jetson AGX Orin with \SI{64}{\giga\byte} shared RAM/VRAM and a \SI{1}{\tera\byte} disk running on the robot.
To design the software components, we first create the skeleton software components with free parametric variables shown in \Cref{fig:pareto} (top left), for which the composite constraints shown in \Cref{fig:pareto} (top center) are solved.
These constraints provide design parameters for the STT model and VLM that would enable them to run locally on the deployment computer alongside the other required software components.
However, these constraints only inform design targets for software that \emph{can} run on the robot, failing to capture other factors relevant to the deployment including the quality of the generated assessment and time required to generate it.

To address this need and informed by the solved constraints, we consider five variants of the Whisper \cite{whisper_2023} STT model (Tiny, Base, Small, Medium, and Large) as well as two leading open-source VLMs including Gemma4 (E2B, E4B, 12B, 26B, and 31B) \cite{gemma4_2026} and Qwen3.5 (0.8B, 2B, 4B, 9B, 27B, and 35B) \cite{qwen3_2025} and generate morphisms associated with each variant.
As a proxy metric for overall assessment quality, we use a hybrid of score on the MMMU-Pro benchmark \cite{mmmu_pro_2025} for VLM evaluation and word error rate (WER) for STT model evaluation, both of which are metrics provided by the respective model authors.
\begin{equation}
    F_{\mathrm{assessment}} = \mathrm{MMMU} * (1 - \mathrm{WER})
\end{equation}
We also measured the GPU usage and latency of each model processing mission-similar audio, image, and text inputs.

\begin{figure}[t]
\centering
\vspace{1ex}
    \centering
    \begin{tabular}{c@{$\;\to\;\;$}c@{$\;\to\;\;\;$}c}
    \input{diagrams/pareto/stt_vlm}
    &
    \begin{minipage}{0.16\textwidth}
    \input{diagrams/pareto/params}
    \end{minipage}
    &
    \input{diagrams/pareto/whisper_gemma}
    \end{tabular}
    \vspace{1ex}
    
    \includegraphics[width=\columnwidth]{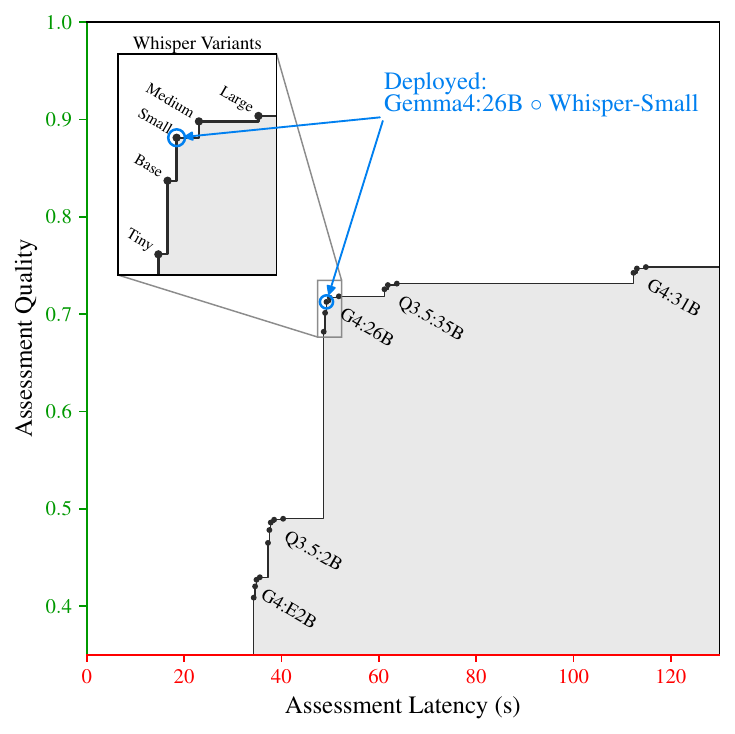}
\vspace{-2ex}
\caption{%
    Development of the combined assessment stack, including hypothetical STT and VLM software components (top left).
    Solving for the free parameters within the subcomposition yields actionable design targets (top center).
    A combination of five STT candidates (Tiny, Base, Small, Medium, and Large variants of Whisper \cite{whisper_2023}) and eleven VLM candidates (E2B, E4B, 12B, 26B, and 31B variants of Gemma4 \cite{gemma4_2026} and 0.8B, 2B, 4B, 9B, 27B, and 35B variants of Qwen3.5 \cite{qwen3_2025}) were evaluated to generate the Pareto front (bottom).
    Of the 55 total compositions, 20 non-dominated compositions form the front.
    Based on these results, the implemented software components use the Small variant of Whisper and the 26B parameter variant of Gemma4, respectively, with both adhering to the standard of \cite{coral_2026} (top right).
}
\label{fig:pareto}
\end{figure}
\begin{figure*}
    \centering
    \vspace*{1ex}
    \begin{tabular}{c@{$+\;\;$}c@{$\;=\;\;$}c@{\;$=$\;\;}c@{$\;\to\kern-1.5em$}c}
    \input{diagrams/bt/behavior_tree_free}
    &
    \input{diagrams/bt/take_photo}
    &
    \input{diagrams/bt/behavior_tree_complete}
    &
    \input{diagrams/bt/behavior_tree_composite}
    &
    \begin{minipage}{0.4\textwidth}
        \input{diagrams/bt/xml}
    \end{minipage}
    \end{tabular}
    \caption{%
        A simple behavior tree to interact verbally using the STT and VLM models optimized in \Cref{fig:pareto} plus TTS capabilities is developed. 
        An initial composition is solved with a free structural variable $\struct$. 
        The conditions of $\struct$ are solved such that the composition is valid. 
        This behavior tree is stored as a new \sw{VLMInteraction} behavior component, which is reasoned over in \Cref{subsec:applications:runtime}. 
        Once a real behavior matching the solved specification for $\struct$ has been implemented, during runtime, the remaining free parametric variables $\param$ are solved and the morphism application of \sw{VLMInteraction} is grounded to the executable behavior tree shown right, translating the symbolic action into real robot behavior.
    }
    \vspace*{-1ex}
    \label{fig:behavior_tree_design}
\end{figure*}
\begin{figure*}
    \centering
    \begin{tabular}{c@{$\;\to$\;\;}c@{\qquad}c@{$\!\to$\;\;\;}c}
    \input{diagrams/lidar/lidar}
    &
    \begin{minipage}{0.35\textwidth}
    \input{diagrams/lidar/params}
    \end{minipage}
    &
    \input{diagrams/yoloe/yoloe}
    &
    \begin{minipage}{0.35\textwidth}
    \input{diagrams/yoloe/compose}
    \input{diagrams/yoloe/params}
    \end{minipage}
    \end{tabular}
    \caption{
        Runtime configurations for a LiDAR hardware component and a YOLOE \cite{yoloe_2025} software component adherent to the standard of \cite{coral_2026}.
        Because the robot in our demonstration (\Cref{fig:demo}) cannot reconfigure its own hardware, morphism applications associated with hardware are grounded to human-readable instructions (left) which must be carried out manually by a human teammate. 
        In our demonstration, these instructions are provided via an augmented reality headset.
        Morphism applications associated with software are grounded into the compose and parameter YAML file formats required by \cite{coral_2026} (right), and all reconfiguration and management of software is performed autonomously.
    }
    \vspace*{-1ex}
    \label{fig:config}
\end{figure*}
\begin{figure*}[htbp!]
    \centering
    \vspace*{1ex}
    \ifsubmission
        \includegraphics[width=0.99\textwidth]{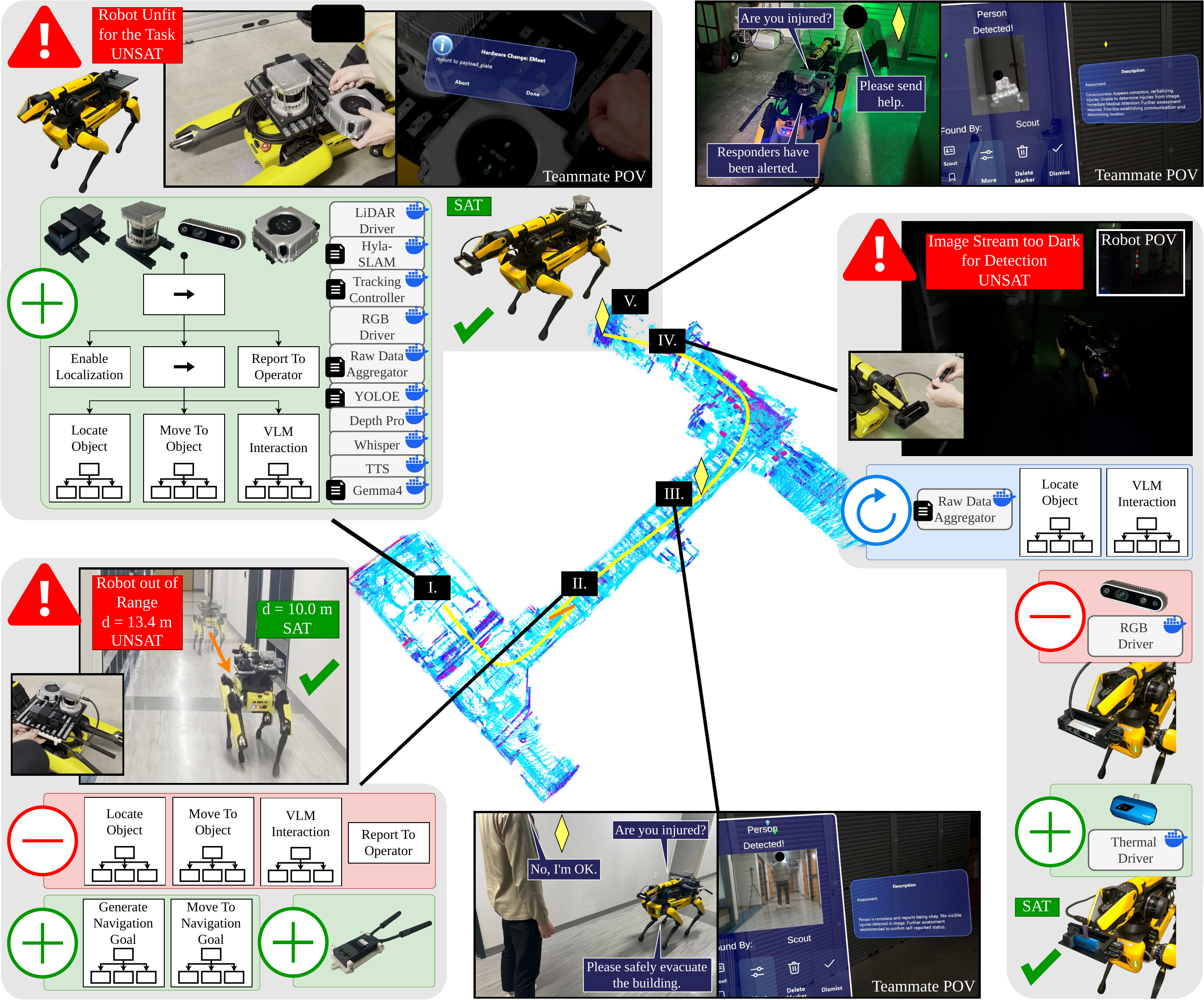}
    \else
        \includegraphics[width=0.99\textwidth]{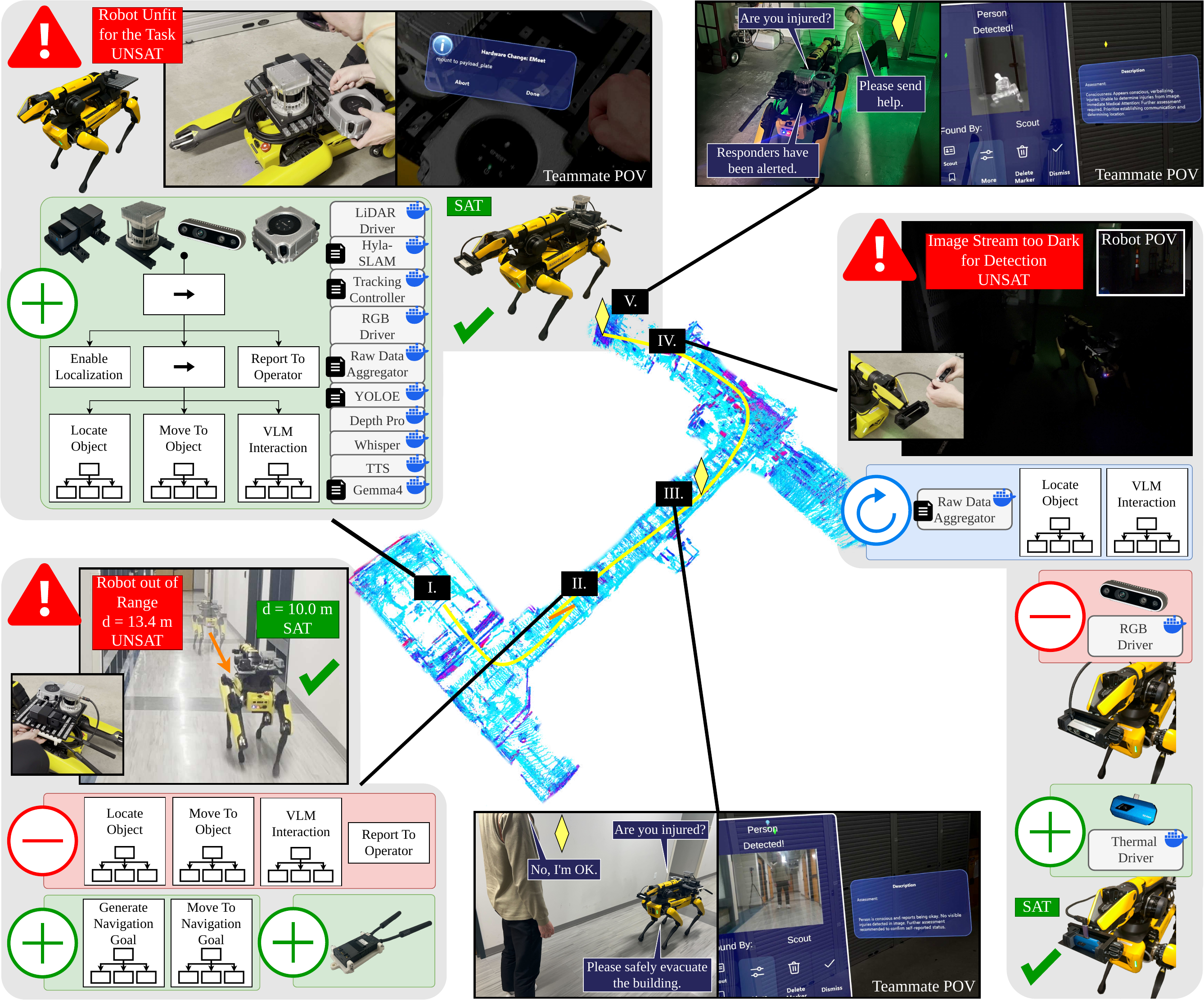}
    \fi
    \caption{%
        Search and Rescue Demonstration: 
        Counter-clockwise from top left: 
        I) A quadruped robot and human teammate with an AR headset are tasked with performing a search-and-rescue mission. 
        The robot cannot perform the task in its base composition ($\unsat$), and solves for a new composition to achieve the task.
        Software and behavior recomposition are autonomous and instructions for hardware recomposition are provided to the teammate via the AR headset.
        Once reintegrated, the robot is $\sat$.
        II) During deployment, the robot exceeds a \SI{10}{\m} range constraint from the teammate.
        The robot tries to solve for a new composition that would render it $\sat$, and generates and executes a new behavior tree to move itself back within communication range.
        To continue exploring further into the facility, the teammate attaches a long-range radio component, which significantly extends the range constraint.
        III) Upon encountering a person, the robot approaches, interacts with the person, and shares their location and an assessment of their condition with the teammate.
        IV) During exploration, the robot enters a dark room and is no longer able to detect people within its RGB image stream, rendering it $\unsat$.
        The robot solves for an updated composition to become $\sat$, which requires a physical swap of the RGB camera for a thermal one, corresponding software changes, and modifications to its behavior tree.
        In the updated composition, the robot is once again $\sat$.
        V) Using the thermal camera, the robot locates and interacts with a person in darkness, providing the same location and assessment information to the remote teammate.
    }
    \vspace*{-1ex}
    \label{fig:demo}
\end{figure*}

Using these measurements and our hybrid quality metric, we created a set of symbolic representations for the five STT model variants and eleven VLM variants and generated the Pareto front shown in \Cref{fig:pareto} (bottom).
Importantly, the Pareto front shows the optimized subcompositions in the context of the resources provided by the deployment computer and all other components deployed in the search-and-rescue mission, not in isolation of the assessment capability alone.
Of the 55 possible subcompositions of STT model and VLM, all 55 are feasible (expected, because the selection of components was guided by the initially solved constraints), and 20 are Pareto-optimal, with the remaining 35 being dominated by at least one other subcomposition.
Using these results, the final STT model and VLM software components were designed, using the Small variant of Whisper and the 26B parameter variant of Gemma4, respectively, as shown in \Cref{fig:pareto} (top right).

Next, we demonstrate solving an integration gap within a partially formed composition using a structural free variable $\struct$.
\Cref{fig:behavior_tree_design} shows a behavior subtree skeleton that uses the STT model and VLM components designed above plus a capability provided by a text-to-speech (TTS) model to interact with the person being assessed.
The \sw{PromptVLM} behavior requires both text and image inputs, but the behavior subtree skeleton does not provide a source for the image input, so the composition is $\unsat$.
A free structural variable $\struct$ is introduced, and its conditions are solved such that the image input is provided to the \sw{PromptVLM} behavior and the composition is $\sat$.
$\struct$ is then implemented as a real behavior \sw{TakePhoto} that takes a ROS image topic as input and provides a snapshot of data from that topic as output.
This interaction composition is stored as a new \sw{VLMInteraction} morphism, and during runtime, its parametric free variables $\param$ are solved and grounded to the subtree shown in \Cref{fig:behavior_tree_design} (right), translating the symbolic action into real robot behavior.

The STT model and VLM subcomposition and interaction subtree designed in this section are deployed in the search-and-rescue mission described in the following subsection.
This application demonstrates that the design-space exploration performed by dedicated co-design tools \cite{co_design_theory_2015,co_design_embodied_intelligence_2021} falls out of our formulation as one query among several rather than requiring separate tooling; because these methods do not release comparable open-source implementations, we do not attempt a runtime comparison, but note that \Cref{fig:pareto} required no purpose-built machinery beyond the model already used for synthesis and runtime recomposition elsewhere in this paper.

\subsection{Runtime Recomposition in a Search-and-Rescue Mission}
\label{subsec:applications:runtime}

In this demonstration, a robot and human teammate are tasked with performing a search-and-rescue mission in an unmapped facility.
The robot is initialized with symbolic representations of a set of hardware components that can be interfaced with it via a simple rail-based mounting mechanism, \cite{coral_2026}-compatible software components already present on the system but not running associated with various capabilities that may be useful during the mission, including drivers for various hardware components, image detection, STT, VLM, TTS, tracking, and SLAM, and behaviors provided by the various software modules that can be composed into behavior trees, also compatible with \cite{coral_2026}.
The human wears an augmented reality (AR) headset that enables in situ communication between the robot and teammate and allows the teammate to see markers in the environment showing the location of the robot and any people detected during the mission.

When provided the search-and-rescue objective, the robot recognizes it is $\unsat$ in its initial (empty) composition, and autonomously generates a new composition for itself to become $\sat$ including interfacing of new hardware modules, starting of various software components, and generating and executing a task-aligned behavior tree.
In addition to the STT model and VLM subcomposition and interaction subtree, other components utilized in the composition include a LiDAR sensor that enables a software component based on \cite{hyla_slam_2025} required to allow the robot to localize itself in the unfamiliar environment and a software component based on YOLOE \cite{yoloe_2025} polled at high frequency to detect people in the environment before initiating the interaction process.
As examples of the grounding of these symbolic components into actionable updates to the robot state, we show the free parametric variables $\param$ solved for the LiDAR and YOLOE components in \Cref{fig:config}.
In this demonstration, our robot is not designed to have self-reconfigurable hardware, so hardware components are grounded to a set of human-readable instructions related to the physical mounting and establishment of power and communication connections with the robot, as shown in \Cref{fig:config} (left).
These instructions are provided to the human teammate via the AR headset, who then must perform the physical reconfiguration following the instructions.
For software components, grounding includes generating the compose and parameter YAML files required by \cite{coral_2026} with the solved parametric variables $\param$, as shown in \Cref{fig:config} (right).
Software recomposition is managed fully autonomously by the robot, including starting, stopping, and reparameterizing software components as required by the generated composition.
Finally, grounding of behavior trees includes generating the XML file required by \cite{coral_2026} with the solved parametric variables $\param$, as shown in \Cref{fig:behavior_tree_design} (right).
Behavior tree recomposition is also managed autonomously.
The complete robot composition including hardware, software, and behavior is shown in \Cref{fig:demo}-I.

The robot periodically checks whether it is still $\sat$ in its current composition during deployment.
In \Cref{fig:demo}-II, it recognizes it has exceeded a communication-related range constraint from the human teammate, and solves for a new composition that includes a behavior tree to walk itself back into range to become $\sat$ again.
To traverse further into the facility, the teammate attaches a long-range radio, which significantly extends this communication range.
In \Cref{fig:demo}-III, the robot detects, moves to, interacts with, and provides an assessment of a detected person within the environment, sharing the assessment and their location with the teammate via the AR headset.
While traversing deeper into the facility, the robot enters a dark room and its image stream becomes too dark to detect people (implemented as a health check on the camera driver computing median pixel brightness), as shown in \Cref{fig:demo}-IV.
The robot recognizes it is $\unsat$, and regenerates a new composition to become $\sat$ again, which includes swapping its RGB camera with a thermal camera, stopping the RGB camera driver software and starting the thermal camera driver software, and reinitializing another software component and behaviors in its behavior tree with new parameters.
In this composition, the robot is once again $\sat$ and is able to detect, move to, interact with, and assess another person detected in the dark room, as shown in \Cref{fig:demo}-V.

Throughout this demonstration, the robot autonomously determines when it has become $\unsat$ due to changes in its task (\Cref{fig:demo}-I), state (\Cref{fig:demo}-II), or environment (\Cref{fig:demo}-IV) and generates new compositions to become $\sat$ again, including grounding symbolic components into actionable updates to its hardware, software, and behavior.
These reconfiguration events are used as benchmarks to evaluate the performance of our approach in \Cref{sec:evaluation}, and single-edit perturbations of the same deployment scenario are used in the diagnosis, relaxation, and recomposition studies in \Cref{subsec:eval:queries}.

\section{Evaluation}
\label{sec:evaluation}

Our primary claim is that lifelong recomposition is a family of queries over one persistent model, of which planning a single composition is only one.
Our evaluation is organized accordingly, as follows.
We first establish that our solver is competitive against baselines at the task of planning a single feasible or optimal composition so that the remaining comparisons are not excused by planning weakness (\Cref{subsec:eval:planning}).
We then measure two further properties supporting lifelong recomposition, including:
\begin{itemize}
    \item How much model modification a deployment-time change requires before a formalism can re-solve (\Cref{subsec:eval:perturbation}); and
    \item Whether the formalism can answer questions that arise when re-solving fails and endangers the deployed system (\Cref{subsec:approach:queries}), including why the composition is unsatisfiable, what the minimal changes are to make it satisfiable, and how much the deployed system must be recomposed (\Cref{subsec:eval:queries}).
\end{itemize}
Finally, \Cref{tab:capabilities} summarizes the expressiveness of our approach and the baselines in terms of the requirements on the modeler and the variable types that can be reasoned over and co-optimized. 

\subsection{Setup and Baselines}
\label{subsec:eval:setup}
We compare against official implementations of the three planning formalisms nearest our setting, each in its strongest configuration: (1) PDDL 2.1 \cite{pddl2_1_2003} with ENHSP \cite{enhsp_2016} as an optimal numeric backend planner; (2) PDDLStream \cite{pddlstream_2020}, both in incremental sampler (PS$_{\text{s}}$) and adaptive optimizer (PS$_{\text{o}}$) configurations with Fast Downward \cite{fast_downward_2006} as the backend planner; and (3) SMTPlan+ \cite{smtplanplus_2016}, a satisficing PDDL+ planner that uses the Z3 SMT solver \cite{z3_2008} as its backend.
Our implementation similarly uses Z3 as its backend solver, so we can attribute differences in performance to the formalism rather than the solver.

Every measurement in this section for every approach was taken onboard the same robot's NVIDIA Jetson AGX Orin from \Cref{sec:applications}, so reported times capture the expected performance on an edge device in a real deployment.
All reported times are the median over 10 trials, and interquartile ranges are below \SI{0.6}{\second} for all findings except one (\SI{1.26}{\second}, ours on \Cref{fig:demo}-II).

We establish three rules for the comparison.
First, every baseline receives its best-faith encoding, including encodings that strengthen the baseline beyond its standard usage.
For example, for PDDL 2.1 with ENHSP, we authored an additional repeatable-increment encoding that absorbs some numeric changes that its standard encoding cannot.
Second, no baseline is ever fed an ill-formed input.
Negative results are correct outputs of correct encodings, not failures of the encoding.
Third, mission-time information, including sensed or computed values, operator requests, and goals, may be written into \emph{problem} files for every formalism.
What distinguishes formalisms is what must change in the \emph{model} (domain, streams, optimizers, costs, etc.) when the system updates.
We release these encodings and our full results with our code\textsuperscript{\ref{fn:code}}.

\begin{table}[t]
\centering
\vspace{1ex}
\caption{%
    Solve times (seconds) on the search-and-rescue deployment (\Cref{fig:demo}-I/II/IV) and three microbenchmarks, all measured onboard the robot's NVIDIA Jetson AGX Orin (median of 10 trials). 
    $\sat$ rows seek any feasible plan; OPT rows seek an optimum.
    PDDL with ENHSP \cite{enhsp_2016};
    PS$_{\text{s}}$/PS$_{\text{o}}$: PDDLStream sampler/optimizer~\cite{pddlstream_2020} with Fast Downward (FD) \cite{fast_downward_2006}; 
    SMTP: SMTPlan+ \cite{smtplanplus_2016}. 
    \cmark~=~optimal ($\opt$) or feasible ($\sat$) plan found; 
    \xmark~=~suboptimal ($\subopt$) or infeasible ($\unsat$); 
    \notapp~=~optimizer inapplicable (feasibility-only row, no numeric objective, so the adaptive config reduces to the sampler);
    \notenc~=~a valid encoding exists in principle but we did not author it for this problem: PDDLStream on the full-scale search-and-rescue model, whose hand-written stream/optimizer bodies are a substantial modeling effort we instead demonstrate on the microbenchmarks that isolate the same capabilities.
}
\vspace{1ex}
\scriptsize
\setlength{\tabcolsep}{4pt}
\renewcommand{\arraystretch}{1.15}
\begin{tabular*}{\columnwidth}{@{\extracolsep{\fill}}l c c c c c@{}}
\toprule
Scenario
    & \makecell[c]{PDDL\\(ENHSP)} 
    & \makecell[c]{PS$_{\text{s}}$\\(FD)} 
    & \makecell[c]{PS$_{\text{o}}$\\(FD)} 
    & SMTP 
    & Ours \\
\midrule
\multicolumn{6}{@{}l}{\textit{Search and rescue demonstration (\Cref{fig:demo}):}}\\
\Cref{fig:demo}-I ($\sat$)
    & \makecell[c]{\cmark\,(1.01\,s)\\\satmark}
    & \notenc 
    & \notenc 
    & \makecell[c]{\cmark\,(2.63\,s)\\\satmark}
    & \makecell[c]{\cmark\,(6.46\,s)\\\satmark} \\
\addlinespace
\Cref{fig:demo}-II ($\opt$)
    & \makecell[c]{\xmark\,(0.58\,s)\\\suboptmark}
    & \notenc 
    & \notenc 
    & \makecell[c]{\xmark\,(0.21\,s)\\\suboptmark}
    & \makecell[c]{\cmark\,(4.85\,s)\\\optmark} \\
\addlinespace
\Cref{fig:demo}-IV ($\sat$)
    & \makecell[c]{\cmark\,(1.02\,s)\\\satmark}
    & \notenc 
    & \notenc 
    & \makecell[c]{\cmark\,(15.9\,s)\\\satmark}
    & \makecell[c]{\cmark\,(6.66\,s)\\\satmark} \\
\midrule
\multicolumn{6}{@{}l}{\textit{Microbenchmark:}}\\
GPU alloc.\ ($\opt$)
    & \makecell[c]{\cmark\,(0.51\,s)\\\optmark}
    & \makecell[c]{\cmark\,(13.1\,s)\\\optmark}
    & \makecell[c]{\cmark\,(0.56\,s)\\\optmark}
    & \makecell[c]{\xmark\,(0.05\,s)\\\suboptmark}
    & \makecell[c]{\cmark\,(0.12\,s)\\\optmark} \\
\addlinespace
Navigation ($\opt$)
    & \makecell[c]{\xmark\,(0.48\,s)\\\suboptmark}
    & \makecell[c]{\xmark\,(5.05\,s)\\\suboptmark}
    & \makecell[c]{\cmark\,(0.53\,s)\\\optmark}
    & \makecell[c]{\xmark\,(0.02\,s)\\\suboptmark}
    & \makecell[c]{\cmark\,(0.02\,s)\\\optmark} \\
\addlinespace
Detection ($\sat$)
    & \makecell[c]{\cmark\,(0.48\,s)\\\satmark}
    & \makecell[c]{\cmark\,(0.03\,s)\\\satmark}
    & \notapp 
    & \makecell[c]{\cmark\,(0.05\,s)\\\satmark}
    & \makecell[c]{\cmark\,(0.11\,s)\\\satmark} \\
\bottomrule
\end{tabular*}
\label{tab:runtimes}
\end{table}

\begin{table*}[t]
\centering
\vspace{1ex}
\caption{%
    Absorbing mission-time change. 
    Each row perturbs a solved problem by one runtime event; cells report the \emph{model} edits (lines outside the problem file, comments excluded) a human must make before re-solving, and the re-solve outcome with median time over 10 trials on the robot's Jetson.
    Model-line counts are \emph{per configuration}: added$+$removed substantive lines in the \texttt{.pddl} files each config loads--sampler domain$+$stream, optimizer domain$+$optimizer--excluding problem definitions. 
    Outcome markers match \Cref{tab:runtimes}: 
    \cmark~=~optimal/feasible; 
    \xmark~=~suboptimal/infeasible; 
    \notapp~=~optimizer inapplicable.
}
\vspace{1ex}
\scriptsize
\setlength{\tabcolsep}{4pt}
\renewcommand{\arraystretch}{1.15}
\begin{tabular*}{\textwidth}{@{\extracolsep{\fill}}l c c c c c@{}}
\toprule
Mission-time event 
    & \makecell[c]{PDDL\\(ENHSP)} 
    & \makecell[c]{PS$_{\text{s}}$\\(FD)} 
    & \makecell[c]{PS$_{\text{o}}$\\(FD)} 
    & SMTP 
    & Ours \\
\midrule
\makecell[l]{GPU upgrade:\\GPU pool $+$\SI{1}{\giga\byte} ($\opt$)}
  & \makecell[c]{\numpad{0}\,L $\to$ \xmark$^\dagger$\!(0.51\,s)\\\suboptmark}
  & \makecell[c]{\numpad{0}\,L $\to$ \cmark\,(15.3\,s)\\\optmark}
  & \makecell[c]{\numpad{0}\,L $\to$ \cmark\,(0.55\,s)\\\optmark}
  & \makecell[c]{\numpad{0}\,L $\to$ \xmark\,(0.05\,s)\\\suboptmark}
  & \makecell[c]{\numpad{0}\,L $\to$ \cmark\,(0.12\,s)\\\optmark} \\
\addlinespace
\makecell[l]{Re-tasking:\\objective changed ($\opt$)}
  & \makecell[c]{\numpad{34}\,L $\to$ \cmark\,(0.51\,s)\\\optmark}
  & \makecell[c]{\numpad{30}\,L $\to$ \cmark\,(13.1\,s)\\\optmark}
  & \makecell[c]{\numpad{32}\,L $\to$ \cmark\,(0.55\,s)\\\optmark}
  & \makecell[c]{\numpad{34}\,L $\to$ \xmark\,(0.06\,s)\\\suboptmark}
  & \makecell[c]{\numpad{0}\,L $\to$ \cmark\,(0.12\,s)\\\optmark} \\
\addlinespace
\makecell[l]{Increased precision:\\exact real measurement (7.73) ($\sat$)}
  & \makecell[c]{\numpad{0}\,L $\to$ \xmark\,(0.46\,s)\\\unsatmark}
  & \makecell[c]{\numpad{24}\,L $\to$ \xmark\,(3.90\,s)\\\unsatmark}
  & \makecell[c]{\numpad{22}\,L $\to$ \cmark\,(0.53\,s)\\\satmark}
  & \makecell[c]{\numpad{0}\,L $\to$ \cmark\,(0.02\,s)\\\satmark}
  & \makecell[c]{\numpad{0}\,L $\to$ \cmark\,(0.02\,s)\\\satmark} \\
\addlinespace
\makecell[l]{Unmodeled request:\\out-of-vocabulary class ($\sat$)}
  & \makecell[c]{\numpad{0}\,L $\to$ \xmark\,(0.45\,s)\\\unsatmark}
  & \makecell[c]{\numpad{0}\,L $\to$ \xmark\,(3.62\,s)\\\unsatmark}
  & \notapp
  & \makecell[c]{\numpad{0}\,L $\to$ \xmark\,(1.68\,s)\\\unsatmark}
  & \makecell[c]{\numpad{0}\,L $\to$ \cmark\,(0.11\,s)\\\satmark} \\
\bottomrule
\end{tabular*}
\\[3pt]
{\scriptsize 
    $^\dagger$The enumerated encoding re-solves but returns the \emph{stale} optimum (the new one is off its grid); a repeatable-increment re-encoding we authored for ENHSP recovers the optimum at the cost of 36 domain lines.
}
\label{tab:perturbations}
\end{table*}

\begin{table}[t]
\centering
\caption{%
    Lifelong queries on the deployed search-and-rescue model (\Cref{fig:demo}-I).
    Top rows: a budget cut renders the deployment infeasible.
    Bottom row: a cheaper interchangeable camera enters the catalog while the incumbent remains deployed.
    Baselines are given their best-faith workaround where one exists.
    Churn = components added$+$removed relative to the deployed system.
    PDDLStream is omitted: it has no encoding of the full-scale search-and-rescue model (\Cref{tab:runtimes}, \notenc), so no workaround column can be measured for it.
}
\vspace{1ex}
\scriptsize
\setlength{\tabcolsep}{2.5pt}
\renewcommand{\arraystretch}{1.15}
\begin{tabular*}{\columnwidth}{@{\extracolsep{\fill}}l l l l@{}}
\toprule
Query
    & \multicolumn{1}{c}{\makecell[c]{PDDL\\(ENHSP)}}
    & \multicolumn{1}{c}{SMTP}
    & \multicolumn{1}{c}{Ours} \\
\midrule
\makecell[l]{Diagnose: why\\does the deployed\\config.\ fail?}
      & \makecell[l]{``unsolvable''\\no cause\\(0.53\,s)}
      & \makecell[l]{timeout$^\dagger$\\no certificate}
      & \makecell[l]{minimal unsat core:\\1 of 94 constraints\\(high LiDAR cost)\\(0.08\,s)} \\
\addlinespace
\makecell[l]{Relax: minimal\\restoration?}
      & \makecell[l]{$+$19{,}860 USD after\\22 full runs with\\bisection replan (19.6\,s);\\parameter must\\be designated}
      & \makecell[l]{impractical:\\every $\unsat$\\probe times out}
      & \makecell[l]{$+$19{,}860 USD\\exact, 1 call (11.4\,s);\\all resources relaxed\\jointly, no per-\\parameter loop} \\
\addlinespace
\makecell[l]{Recompose:\\minimal change?}
      & \makecell[l]{churn 4 (1.22\,s)\\no stability term\\in the encoding}
      & \makecell[l]{churn 4 (8.97\,s)\\no stability term\\in the encoding}
      & \makecell[l]{churn 0 (5.35\,s)\\components held\\by \Cref{eq:reconfigure}$^\ddagger$} \\
\bottomrule
\end{tabular*}
\\[3pt]
{
    \scriptsize
    $^\dagger$Iterative-deepening satisficing search cannot certify infeasibility; exceeded a 300\,s budget on every trial.
    \quad
    $^\ddagger$\emph{objective-first} lexicographic priority recovers the cost-optimal churn-4 solution; \emph{stability-first} preserves the incumbent.
}
\label{tab:queries}
\end{table}

\begin{table}[t]
\centering
\caption{%
    What each planner supports and what the modeler must supply to reach the optimum on the coupled-parameter benchmarks, grouped as modeling burden, expressible free-variable classes, and outcome. 
    \cmark~=~yes/required, \xmark~=~no/not needed; \textit{enum.}/\textit{disc.}~=~ expressible only via pre-enumeration or discretization; $\varnothing$~=~none.
}
\vspace{1ex}
\scriptsize
\setlength{\tabcolsep}{3.5pt}
\renewcommand{\arraystretch}{1.0}
\begin{tabular*}{\columnwidth}{@{\extracolsep{\fill}}l c c c c c@{}}
\toprule
    & PDDL 
    & PS$_{\text{s}}$ 
    & PS$_{\text{o}}$ 
    & SMTP 
    & Ours \\
\midrule
\multicolumn{6}{@{}l}{\textit{Modeler must hand-encode:}}\\
Enumerated/discretized values
    & \cmark 
    & \cmark 
    & \xmark
    & \xmark 
    & \xmark \\
Per-action costs tuned to objective
    & \cmark 
    & \cmark 
    & \cmark 
    & \xmark\rlap{$^\ddagger$}
    & \xmark \\
Solution structure in model
    & \xmark
    & \xmark
    & \cmark 
    & \xmark 
    & \xmark \\
\midrule
\multicolumn{6}{@{}l}{\textit{Supported variable types:}}\\
Activations $\activation$ 
    & \cmark 
    & \cmark 
    & \cmark 
    & \cmark 
    & \cmark \\
Parameters $\param$ ($\mathbb{R},\mathbb{Z}$) 
    & \textit{enum.} 
    & \textit{disc.} 
    & \cmark 
    & \cmark 
    & \cmark \\
Parameters $\param$ ($\mathrm{String}, \mathrm{Set}, \dots$) 
    & \xmark 
    & \xmark 
    & \xmark 
    & \xmark 
    & \cmark \\
Structures $\struct$ 
    & \xmark 
    & \xmark 
    & \xmark 
    & \xmark 
    & \cmark \\
\midrule
\multicolumn{6}{@{}l}{\textit{Outcome:}}\\
Reaches optimum
    & \cmark\rlap{$^\dagger$}
    & \cmark\rlap{$^\dagger$}
    & \cmark 
    & \xmark\rlap{$^\ddagger$}
    & \cmark \\
Optimum \emph{without} prior knowledge
    & \xmark 
    & \xmark 
    & \xmark 
    & \xmark 
    & \cmark \\
Optimized in single solve 
    & $\activation$ 
    & $\activation$ 
    & $\activation \otimes \param$ 
    & $\varnothing$\rlap{$^\ddagger$}
    & $\activation \otimes \param \otimes \struct$ \\ 
\bottomrule
\end{tabular*}
\\[3pt]
{\scriptsize 
    $^\dagger$Only when the optimum coincides with an enumerated/discrete
    value. 
    \quad 
    $^\ddagger$SMTPlan+ is a \emph{satisficing} PDDL+ planner: its SMT backend binds continuous parameters exactly (hence \cmark on $\param\in\mathbb{R},\mathbb{Z}$, and no discretization is required), but it ignores optimization metrics; it is shown here for expressiveness, with runtimes in \Cref{tab:runtimes,tab:perturbations}.
}
\label{tab:capabilities}
\end{table}

\subsection{Planning Performance}
\label{subsec:eval:planning}
\Cref{tab:runtimes} reports solve times on the three recomposition events of the search-and-rescue deployment (\Cref{fig:demo}-I/II/IV), which are encoded verbatim in each formalism, and three smaller microbenchmarks that isolate individual capabilities.
The first microbenchmark, GPU allocation, considers allocation of a set of GPU-intensive processes sharing a single pool of resources with an objective to maximize the confidence of one process which is coupled to its resource consumption from the shared pool. 
The second, Navigation, represents an objective to travel as far as possible given a coupled battery constraint.
The third, Detection, considers creating a detection pipeline that can detect a provided class from an input datastream of image data.
We claim no planning speed advantage for our approach: ENHSP-backed PDDL 2.1 is faster on the full-scale problems and SMTPlan+ is often the fastest of all on the smaller problems.
Rather, our advantage is in the expressivity our approach affords.
For example, once an objective requires a free real, an open-world symbolic value, or several parameters coupled by a global objective, PDDL struggles to express it (returning $\subopt$ grid values on the \Cref{fig:demo}-II and Navigation microbenchmark rows) and PDDLStream reaches the optimum only where hand-authored streams already encode the solution structure.
SMTPlan+ shows that the separation is due to formulation rather than solving approach, as despite sharing the same backend solver, because it is a satisficing planner, it returns $\subopt$ solutions on every optimization row.
Our solver synthesizes all free variables from each morphism's conditions and co-optimizes them in a single query, reaching the optimum on every row.

This added expressivity of our approach remains compatible with real-time deployment, as the full search-and-rescue problems solve in \SIrange{4.9}{6.7}{\second} onboard the edge device, which is well within the budget for an event-driven recomposition that is only triggered  when the robot becomes $\unsat$, not at every control step.


\subsection{Absorbing Change During Deployment}
\label{subsec:eval:perturbation}
In the long-lived recompositional robotics context, the ability to re-solve under changes is just as important as initial one-shot configuration generation.
\Cref{tab:perturbations} measures four representative mission-time events, each applied as a one-edit perturbation to previous benchmark problems.

The first modification, GPU upgrade, is an update to the GPU allocation microbenchmark that increases the available GPU memory by \SI{1}{\giga\byte}.
The second modification, Re-tasking, is another update to the GPU allocation microbenchmark that changes the objective from maximizing the confidence of a single process to minimizing the total GPU footprint of all processes. 
The third modification, Sensed value, is an update to the Navigation microbenchmark that increases the precision of the sensed battery level from a single decimal place to two decimal places.
The fourth modification, Operator request, is an update to the Detection microbenchmark that adds an arbitrary detection class provided by an operator at runtime rather than coming from a pre-enumerated vocabulary.
For each formalism, we report the \emph{model} edits a human must make before the planner can even be re-run, together with the re-solve outcome and time.

Several observations are apparent from \Cref{tab:perturbations}.
For the first modification, the PDDL encoding solved with ENHSP re-solves the problem without any model edits, but returns a stale optimum because the new optimum lies off its pre-enumerated grid.
Recovering the true optimum requires the repeatable-increment re-encoding (36 domain lines) we authored that increases in \SI{1}{\giga\byte} increments.
Such a repeatable-increment re-encoding is also sensitive to the required precision, as changing the GPU upgrade to \SI{500}{\mega\byte} would require another re-encoding that increases in increments $\leq$ \SI{500}{\mega\byte}.
Both PDDLStream configurations are $\opt$ without any changes, but SMTPlan+ returns a $\subopt$ solution because it is a satisficing planner, while our approach recovers $\opt$ without any model edits.
For the second modification, all baseline encodings require model edits to redefine the costs associated with actions such that they are consistent with the new objective.
SMTPlan+ also still returns $\subopt$, while our approach again recovers $\opt$ without any model edits.
For the third modification, PDDL and PDDLStream approaches require model edits, are $\unsat$, or both, while SMTPlan+ recovers the exact real value without any model edits, inheriting this capability from its underlying SMT solver, as does ours.
For the fourth modification, all enumeration-based encodings return $\unsat$ because pre-enumeration cannot anticipate an open vocabulary and no finite model edit would help.
In comparison, our approach can recover a $\sat$ result due to \Cref{prop:grounding}.
Across all four modifications, our approach requires zero model edits to recover $\sat$ and $\opt$ results while most of the baselines require 22--34 model edits, produce $\unsat$ or $\subopt$ results, or both.
In the context of a long-lived, recompositional system, this difference is significant, as the baselines require a human to intervene and re-encode the model while our approach can recover without changes. 

\subsection{Queries Beyond Plan Existence}
\label{subsec:eval:queries}
\Cref{tab:queries} evaluates the three queries of \Cref{subsec:approach:queries} on a variant of the base search-and-rescue problem from \Cref{fig:demo}-I in which financial costs for procuring hardware modules are introduced.

In the first case, a modest budget of 500 USD is provided, which is sufficient to procure the inexpensive hardware modules including the RGB camera, speaker/microphone, and external battery but insufficient to procure the more expensive LiDAR sensor (20,000 USD) utilized in \Cref{fig:demo}-I.
When queried for the cause of unsatisfiability, our solver identifies the LiDAR's financial cost as the sole member of $\mus$, and reports that the problem is $\unsat$ in \SI{0.08}{\second}, while ENHSP-backed PDDL reports the problem is unsolvable with no cause.
SMTPlan+ cannot certify infeasibility or provide an explanation.

When queried for the minimal restoration, our solver provides the solution of increasing the available budget by 19,860~USD in \SI{11.4}{\second} without being told which parameter to relax.
The strongest workaround we could construct for a black-box planner is a bisection loop re-running ENHSP at candidate budgets. 
It recovers the same solution after 22 full iterations in \SI{19.6}{\second}, though it must be provided the parameter(s) to relax, must assume feasibility is monotonic in the parameter(s), and must be repeated per candidate parameter.
For SMTPlan+, this same workaround is impractical outright because each $\unsat$ probe times out.

The recompose query uses a slightly different problem in which the budget is increased substantially such that the LiDAR's cost is not prohibitive, but a new RGB camera module is introduced that is less expensive than the existing and deployed RGB camera and the objective is to minimize the total financial cost of the composition.
Both ENHSP-backed PDDL and SMTPlan+ return the cost-optimal solution to the revised problem, which has a total churn of 4 to exchange the existing RGB camera and drivers.
Our approach can similarly recover the cost-optimal solution, but because the cost-versus-stability tradeoff is explicit from \Cref{eq:reconfigure}, it can instead return a solution that preserves the existing deployed configuration with a total churn of 0 at the expense of a slightly higher total cost.

\subsection{Expressivity Summary}
\label{subsec:eval:summary}
\Cref{tab:capabilities} concisely captures the differences in what must be hand-encoded, which free-variable classes can be expressed, and what types of variables can be co-optimized in each benchmark and our formalism.
In summary, the baselines exhibit excellent performance for finding optimal plans over fixed models, which is the task for which they are designed.
However, each new capability we consider important to the objective of lifelong recomposition requires hand-authored workarounds, requires a re-encoding, or is outright inexpressible.
In comparison, our approach enables expression of the same capabilities as minimality queries over the same persistent, categorical model and supports simultaneous co-optimization of all free variables in a single query. 

\section{Discussion and Limitations}
\label{sec:limitations}

Practical deployment of our approach is subject to several considerations.
First, the efficacy of reasoning depends on the precision of the underlying ontology.
For example, while the RGB and thermal image streams in \Cref{subsec:applications:runtime} produce the same type of data, they possess distinct semantic properties.
In our search and rescue task, we treat them equivalently for person detection, but they would not work interchangeably to detect objects with lower thermal contrast.
As with many other formal methods-based approaches, such semantic differences must be explicitly captured.

Second, structural free variables $\struct$ represent the weakest condition assignments that render the composition feasible, not realizable designs.
Nothing compels the solver to invent or combine realistic conditions, and because structural free variables are symbolic morphisms rather than fully-formed hardware, software, or behavior components, implementation of a practical component that matches the synthesized $\struct$ must also be performed externally (as in \Cref{subsec:applications:design}).
It would also be possible to match the synthesized $\struct$ against a catalog of existing components, but this is not a capability we have implemented.

Third, as described in \Cref{sec:applications}, while the robot in our search-and-rescue application has full agency over its own behavior and software, it does not have the ability to reconfigure its own hardware.
Rather than autonomously interfacing new hardware with itself, it generates reconfiguration instructions that must be enacted by a human teammate aided by an AR headset.
This is not a limitation with our approach, but rather a realistic limitation of the overwhelming majority of real robotic systems and even many modular and reconfigurable systems which are not designed to support self-reconfigurable hardware \cite{plug_and_produce_review_2024,concert_2026}.
If our approach were deployed on a system that could reconfigure its own hardware, the modeling/solving approach would not change at all; instead, the \emph{grounding} of the symbolic representations to reality would be adjusted.
For example, rather than generating instructions for a person to follow and sharing them via an AR headset, the symbolic representations could be grounded to self-reconfiguration actions that the robot could perform autonomously.

Fourth, the STT and VLM subcomposition developed in \Cref{subsec:applications:design} inherits the assumptions of its quality metric.
That is, stage successes are treated as independent and the per-variant constants are point estimates drawn from published benchmarks and on-device measurements rather than distributions over operating conditions.
The enumerated front is exact for the model and is bounded by the fidelity of the constants to the deployed system.

Finally, our evaluation compares against planning formalisms which were built to synthesize plans over a fixed model, so evaluating them on model evolution, infeasibility explanation, and stability exercises them outside the envelope for which they were designed.
We compare against them nonetheless because they are the only nearby systems with mature, open implementations that we could run on the same problems and hardware.
The families that address the remaining parts of our setting are even less comparable, as co-design frameworks rarely release tooling that can be utilized on an external problem, and self-adaptive architectures presuppose a fixed adaptation space rather than synthesizing one.
Nor is there an established benchmark for component-based lifelong recomposition spanning hardware, software, and behavior, so the studies in \Cref{sec:evaluation} required us to author both the one-edit problem variants and the best-faith baseline encodings evaluated on them.
These efforts do not substitute a neutral benchmark, but they provide a practical means to evaluate our approach and facilitate comparison with existing methods.

\section{Conclusions}
\label{sec:conclusions}

By formalizing robotic systems as dynamic, categorical circuits, we demonstrate how the boundary between design and deploy stages in the traditional design-then-deploy paradigm can be blurred.
Our approach uses the same categorical model to solve for Pareto-optimal compositions before deployment and to answer questions about the deployed system's feasibility, restoration costs, and minimal changes required for continued operation as the system's capabilities, tasks, or environment unexpectedly change.
We demonstrate our approach in a search-and-rescue application, from pre-deployment design of individual components and subcompositions to runtime recomposition spanning hardware, software, and behavior.
We also evaluate our approach against optimal numeric, stream-based, and SMT-based planners, showing that our framework can answer questions about the deployed system that are either inexpressible or require hand-authored workarounds in the baselines.
Ultimately, this framework provides a mathematical and computational foundation for resilient robotic systems capable of lifelong recomposition in response to changes within themselves, their tasks, and their environments.

{\footnotesize
\bibliographystyle{IEEEtran}
\bibliography{bibliography}
}

\newpage
\vfill

\end{document}